\def\isarxiv{1} 
\theoremstyle{plain}
\newtheorem{theorem}{Theorem}[section]
\newtheorem{lemma}[theorem]{Lemma}
\newtheorem{definition}[theorem]{Definition}
\newtheorem{proposition}[theorem]{Proposition}
\newtheorem{fact}[theorem]{Fact}
\newtheorem{remark}[theorem]{Remark}
\newtheorem{example}[theorem]{Example}
\newcommand{\ov}{\overline}
\newcommand{\NPhard}{\mathsf{NP}\text{-}\mathsf{hard}}
\DeclareMathOperator{\Diag}{Diag}
\newcommand*{\RN}[1]{\expandafter\@slowromancap\romannumeral #1@}
\icmltitlerunning{Dissecting Submission Limit in Desk-Rejections: A Mathematical Analysis of Fairness in AI Conference Policies}
\begin{document}

\ifdefined\isarxiv

\date{}

\title{Dissecting Submission Limit in Desk-Rejections: A Mathematical Analysis of Fairness in AI Conference Policies}
\author{
Yuefan Cao\thanks{\texttt{ ralph1997off@gmail.com}. Zhejiang University.}
\and
Xiaoyu Li\thanks{\texttt{
xiaoyu.li2@student.unsw.edu.au}. University of New South Wales.}
\and
Yingyu Liang\thanks{\texttt{
yingyul@hku.hk}. The University of Hong Kong. \texttt{
yliang@cs.wisc.edu}. University of Wisconsin-Madison.} 
\and
Zhizhou Sha\thanks{\texttt{
shazz20@mails.tsinghua.edu.cn}. Tsinghua University.}
\and
Zhenmei Shi\thanks{\texttt{
zhmeishi@cs.wisc.edu}. University of Wisconsin-Madison.}
\and
Zhao Song\thanks{\texttt{ magic.linuxkde@gmail.com}. The Simons Institute for the Theory of Computing at UC Berkeley.}
\and
Jiahao Zhang\thanks{\texttt{ ml.jiahaozhang02@gmail.com}. Independent Researcher.}
}

\else


\twocolumn[

\icmltitle{Dissecting Submission Limit in Desk-Rejections: A Mathematical Analysis\\ of Fairness in AI Conference Policies}


\icmlsetsymbol{equal}{*}

\begin{icmlauthorlist}
\icmlauthor{Aeiau Zzzz}{equal,to}
\icmlauthor{Bauiu C.~Yyyy}{equal,to,goo}
\icmlauthor{Cieua Vvvvv}{goo}
\icmlauthor{Iaesut Saoeu}{ed}
\icmlauthor{Fiuea Rrrr}{to}
\icmlauthor{Tateu H.~Yasehe}{ed,to,goo}
\icmlauthor{Aaoeu Iasoh}{goo}
\icmlauthor{Buiui Eueu}{ed}
\icmlauthor{Aeuia Zzzz}{ed}
\icmlauthor{Bieea C.~Yyyy}{to,goo}
\icmlauthor{Teoau Xxxx}{ed}\label{eq:335_2}
\icmlauthor{Eee Pppp}{ed}
\end{icmlauthorlist}

\icmlaffiliation{to}{Department of Computation, University of Torontoland, Torontoland, Canada}
\icmlaffiliation{goo}{Googol ShallowMind, New London, Michigan, USA}
\icmlaffiliation{ed}{School of Computation, University of Edenborrow, Edenborrow, United Kingdom}

\icmlcorrespondingauthor{Cieua Vvvvv}{c.vvvvv@googol.com}
\icmlcorrespondingauthor{Eee Pppp}{ep@eden.co.uk}

\icmlkeywords{Machine Learning, ICML}

\vskip 0.3in
]

\printAffiliationsAndNotice{\icmlEqualContribution} 
\fi

\ifdefined\isarxiv
\begin{titlepage}
  \maketitle
  \begin{abstract}

As AI research surges in both impact and volume, conferences have imposed submission limits to maintain paper quality and alleviate organizational pressure. 
In this work, we examine the fairness of desk-rejection systems under submission limits and reveal that existing practices can result in substantial inequities. Specifically, we formally define the paper submission limit problem and identify a critical dilemma: when the number of authors exceeds three, it becomes impossible to reject papers solely based on excessive submissions without negatively impacting innocent authors.
Thus, this issue may unfairly affect early-career researchers, as their submissions may be penalized due to co-authors with significantly higher submission counts, while senior researchers with numerous papers face minimal consequences. To address this, we propose an optimization-based fairness-aware desk-rejection mechanism and formally define two fairness metrics: individual fairness and group fairness. We prove that optimizing individual fairness is NP-hard, whereas group fairness can be efficiently optimized via linear programming. Through case studies, we demonstrate that our proposed system ensures greater equity than existing methods, including those used in CVPR 2025, offering a more socially just approach to managing excessive submissions in AI conferences.

  \end{abstract}
  \thispagestyle{empty}
\end{titlepage}

{\hypersetup{linkcolor=black}
\tableofcontents
}
\newpage

\else

\begin{abstract}

\end{abstract}

\fi

\section{Introduction}

\begin{table}[!ht]\caption{ 
In this table, we summarize the submission limits of top conferences in recent years. For details of each conference website, we refer the readers to Section~\ref{app:sec:conference_links} in the Appendix. 
Some conferences (CVPR, ICCV, WSDM, KDD) employ a conventional desk-reject algorithm (Algorithm~\ref{alg:conventional_desk_reject_algo}), where papers are desk-rejected once an author has registered more than $x$ (the submission limit) papers.
}  \label{tab:conference_submission_limit}
\begin{center}
\begin{tabular}{ |c|c|c|c|c| } 
 \hline
 {\bf Conference Name} & {\bf Year} & {\bf Upper Bound} \\ \hline
 CVPR & 2025 & 25 \\ \hline
 CVPR & 2024 & N/A \\ \hline
 ICCV & 2025 & 25 \\ \hline
 ICCV & 2023 & N/A \\ \hline
 AAAI & 2023-2025 & 10 \\ \hline
 AAAI & 2022 & N/A \\ \hline
 WSDM & 2021-2025 & 10 \\ \hline
 WSDM & 2020 & N/A \\ \hline
 IJCAI & 2021-2025 & 8 \\ \hline
 IJCAI & 2020 & 6 \\ \hline
 IJCAI & 2018-2019 & 10 \\ \hline
 IJCAI & 2017 & N/A \\ \hline
 KDD & 2024-2025 & 7 \\ \hline
 KDD & 2023 & N/A \\ \hline
\end{tabular}
\end{center}
\end{table}

We are living in an era shaped by the unprecedented advancements of Artificial Intelligence (AI), where transformative breakthroughs have emerged across various domains in just a few years. A key driving force behind AI's rapid progress is the prevalence of top conferences held frequently throughout the year, offering dynamic platforms to present many of the field’s most influential papers. For example, ResNet~\cite{hzrs16}, a foundational milestone in deep learning with over 250,000 citations, was first introduced at CVPR 2016. Similarly, the Transformer architecture~\cite{vsp+17}, the backbone of modern large language models, emerged at NeurIPS 2017. More recently, diffusion models~\cite{hja20}, which represent the state-of-the-art in image generation, were presented at NeurIPS 2020, while CLIP~\cite{rkh+21}, a leading model for image-text pretraining, was showcased at ICML 2021. These groundbreaking contributions from top-tier conferences have significantly accelerated the advancement of AI, enriching both theoretical insights and practical applications.

As AI continues to expand its applications and capabilities in real-world domains such as dialogue systems~\cite{szk+22,aaa+23,a24}, image generation~\cite{hja20,sme20,wsd+24,wcz+23,wxz+24}, and video generation~\cite{hsg+22,bdk+23}, its immense potential for commercialization has raised growing enthusiasm in AI research. This enthusiasm has led to a rapid, rocket-like increase in the number of AI-related papers in 2024, as witnessed by recent studies~\cite{stanford_ai_index}. A direct consequence of this surge is the significant rise in submissions to AI conferences, which has placed a heavy burden on program committees tasked with selecting papers for acceptance. To address these challenges and maintain the quality of accepted papers, many leading conferences have introduced submission limits per author. In 2025, a wide range of leading AI conferences, including CVPR, ICCV, AAAI, WSDM, IJCAI, and KDD, have introduced submission limits per author in their guidelines, ranging from a maximum of $x=7$ to $x=25$. Table~\ref{tab:conference_submission_limit} provides an overview of these submission limits across major AI conferences.

However, such a desk-rejection mechanism may result in unintended negative societal impacts due to the Matthew effect in the research community~\cite{bvr18}, as illustrated in Figure~\ref{fig:matthew_effect}. Recent research has shown that the impact of a setback (e.g., a paper rejection) is often much greater for early-career researchers than for senior researchers~\cite{wjw19,scl23}, which shows that the effect of a desk rejection can vary significantly depending on the author’s career stage. For instance, as illustrated in Figure~\ref{fig:unfairness_of_desk_rejection}, consider the case of a young student submitting their only draft to the conference, co-authored with a renowned researcher who submits numerous papers annually. If the paper is desk-rejected due to exceeding submission limits, the senior researcher might view this as a neglectable inconvenience. In contrast, the rejection could have severe consequences for the student, as the paper might be crucial for applying to graduate programs, securing employment, or forming a chapter of their thesis. This disparity in the impact of desk rejections may worsen the Matthew effect in the AI community by disproportionately disadvantaging researchers with only one or two submitted papers, while having little effect on prolific senior researchers. Such outcomes raise important concerns about fairness and equity in current desk-rejection policies.

\begin{figure}[!ht]
    \centering
    \includegraphics[width=0.95\linewidth]{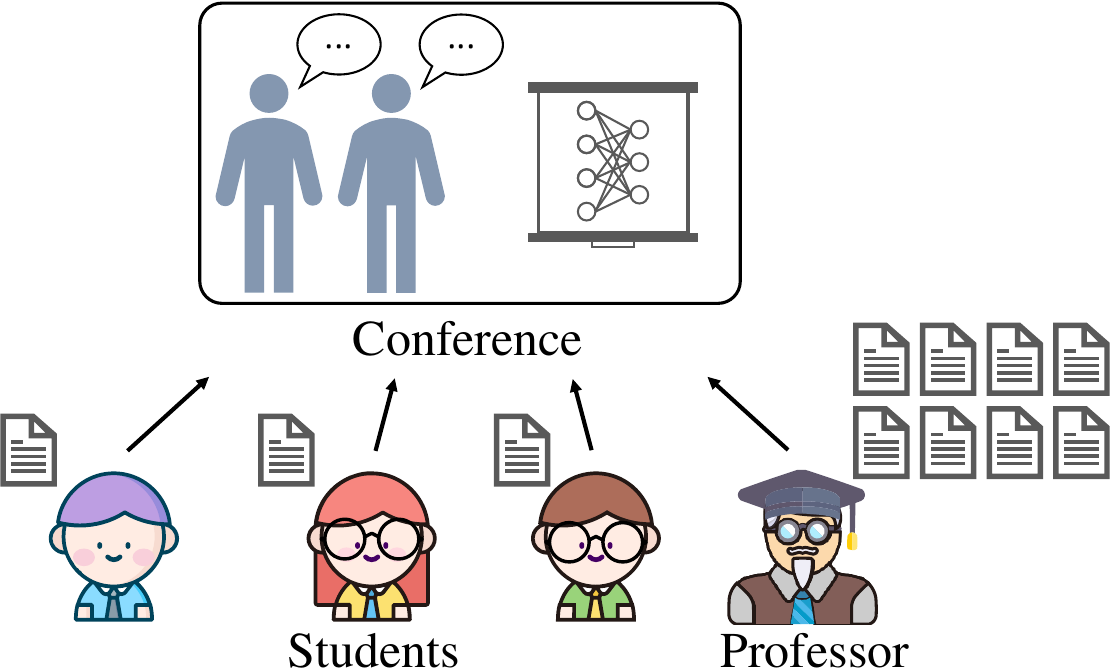}
    \caption{
    The Matthew Effect in the AI community. This figure illustrates the worsening Matthew Effect in the AI community, where senior researchers tend to have a significantly higher number of submissions, while junior researchers have relatively few. 
    }
    \label{fig:matthew_effect}
\end{figure}

In response to the challenges posed by paper-limit-based desk-rejection systems, this work investigates an important and practical problem: ensuring fairness in desk-rejection systems for AI conferences under submission limits. As illustrated in Figure~\ref{fig:our_research}, our goal is to design a fair desk-rejection system that prioritizes rejecting submissions from authors with many papers while protecting those with fewer submissions, particularly early-career researchers. Our key contributions are as follows:

\begin{itemize}
\item We formally define the paper submission limit problem in desk-rejection systems and prove that an ideal system that rejects papers solely based on each author's excessive submissions is mathematically impossible when there are more than three authors.
\item We introduce two fairness metrics: individual fairness and group fairness. We formulate the fairness-aware paper submission limit problem as an integer programming problem. Specifically, we formally prove that optimizing individual fairness is NP-hard, while the group fairness optimization problem can be solved efficiently using any off-the-shelf linear programming solver.
\item Through case studies, we demonstrate that our proposed system achieves greater fairness compared to existing approaches used in top AI conferences such as CVPR 2025, promoting social justice and fostering a more inclusive ML research community.
\end{itemize}

\paragraph{Roadmap.} Our paper is organized as follows: In Section~\ref{sec:related_works}, we review related literature. In Section~\ref{sec:preliminary}, we present the key definition of the paper submission limit problem. In Section~\ref{sec:dr_dilemma}, we show that no algorithm can reach the ideal desk-rejection system without unfair collective punishments. In Section~\ref{sec:fair}, we present our new fairness-aware desk-rejection system. In Section~\ref{sec:case_study}, we show by case studies that our system is better than existing systems. In Section~\ref{sec:conclusion}, we present our conclusions.

\begin{figure*}[!ht]
    \centering
    \includegraphics[width=0.85\linewidth]{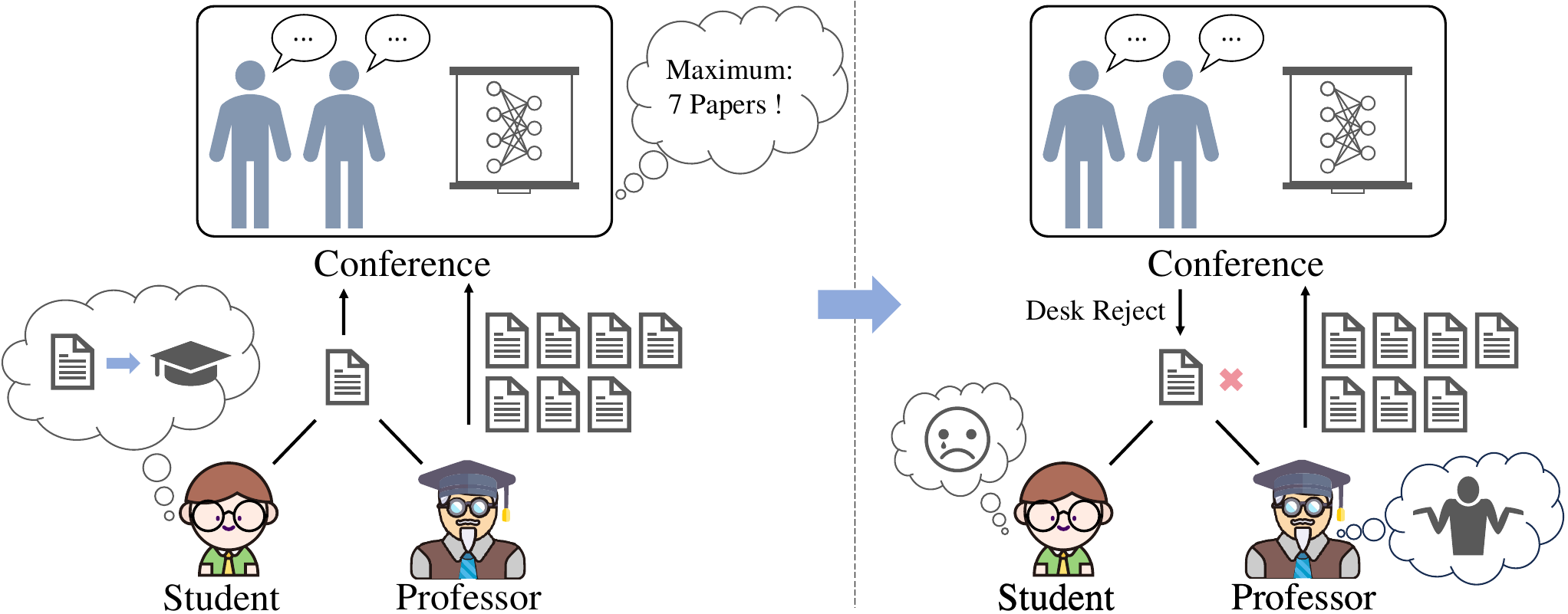}
    \caption{ The unfairness of desk rejection based on submission limits. {\bf Left: A careless mistake.} In this scenario, a young student submits the only paper, co-authored with a professor who submits numerous papers, and carelessly exceeds the submission limit. The paper, which may aim to apply to graduate programs, secure employment, or form a chapter of the thesis, is very important for the student but may not be for the professor. {\bf Right: The desk rejection.} If the paper is desk-rejected due to submission limits, it poses a minor inconvenience to the professor, and the professor can shrug about it due to his remaining papers. However, it could have severe consequences for the students, as the paper is crucial for the student's future plans.
    }
    \label{fig:unfairness_of_desk_rejection}
\end{figure*}
\section{Related Works} \label{sec:related_works}

\subsection{Desk Reject Mechanism}

A wide range of desk-rejection mechanisms have been developed to reduce the human effort involved in the peer-review process~\cite{as21}. One of the most widely adopted desk-rejection rules is rejecting papers that violate anonymity requirements~\cite{jawd02, t18}. This rule is crucial for maintaining unbiased evaluations of researchers from diverse institutions and career levels while preventing conflicts of interest. Another common mechanism addresses duplicate and dual submissions~\cite{s03, l13}, alleviating the duplication of reviewer efforts across multiple venues and upholding ethical publication standards. Additionally, plagiarism~\cite{kc23, er23} is a major concern in desk rejections at AI conferences, as it undermines the integrity of the academic community, violates intellectual property rights, and compromises the originality and credibility of research. In response to the growing number of submissions to AI conferences, new types of desk-rejection rules have recently emerged~\cite{lnz+24}. For example, IJCAI 2020 and NeurIPS 2020 implemented a fast desk-rejection mechanism, allowing area chairs to reject papers based on a quick review of the abstract and main content to manage the review workload. However, this approach introduced noise and sometimes resulted in the rejection of generally good papers, leading to its reduced prevalence compared to more systematic mechanisms like enforcing submission limits, which is the main focus of this paper. To the best of our knowledge, limited literature has explored these emerging desk-rejection techniques, and our work is among the first to formally study the desk-rejection mechanism based on maximum submission limits.

\subsection{The Competitive Race in AI Publication}

Due to the rapid increase in submissions to AI conferences in recent years~\cite{stanford_ai_index}, concerns about the intense competition in these conferences are growing. As Bengio Yoshua noted~\cite{b20blog}: ``It is more competitive, everything is happening fast and putting a lot of pressure on everyone. The field has grown exponentially in size. Students are more protective of their ideas and in a hurry to put them out, by fear that someone else would be working on the same thing elsewhere, and in general, a PhD ends up with at least 50\% more papers than what I gather it was 20 or 30 years ago.'' Consequently, paper acceptance has become increasingly critical in AI job applications~\cite{a22, bbm+24}, as having more papers is now the norm. Therefore, it is crucial to establish fair and practical guidelines for desk rejections~\cite{takb18}, ensuring that every group of authors is treated equitably in AI conferences.

\subsection{Fairness System Design}

Fairness~\cite{f12, mms+21} is a key principle of social justice, reflecting the absence of bias toward individuals or groups based on inherent characteristics. Due to its profound societal impact, fairness has become an essential consideration in the design of algorithms across various computer systems that interact with human factors. In recommender systems, fairness can manifest in various forms, such as item fairness~\cite{zfh+21, glg+21}, which ensures that items from different categories or with varying levels of prior exposure are recommended equitably, and user fairness~\cite{lcf+21, lcx+21}, which guarantees that all users, regardless of their backgrounds or preferences, have equal opportunities to access relevant content. These fairness measures help balance opportunities for both users and retailers, fostering equity in the recommendation process. In candidate selection systems~\cite{g93, whz20}, fairness ensures that all candidates are evaluated solely on merit, independent of factors such as race, gender, or socio-economic background, promoting equality and ensuring that the selection processes are inclusive. In information access systems~\cite{edb+22}, including job search~\cite{wmm+22} and music discovery~\cite{mrp+21}, fairness guarantees that all individuals can access the information they need without discrimination, ensuring equal opportunities for users to make informed decisions. Similarly, in dialog systems~\cite{gya22,grb+24}, fairness ensures that language models avoid generating biased text or making inappropriate word-context associations related to social groups, supporting equitable and respectful interactions. Moreover, recent research has investigated group fairness in peer review processes for AI conferences, highlighting the importance of equitable evaluation for submissions~\cite{ams23}.
Despite the widespread focus on fairness in algorithmic design, the fairness of desk-rejection mechanisms remains an open question and serves as the primary motivation for this paper.

\section{Preliminary} \label{sec:preliminary}
In this section, we first introduce the notations in Section~\ref{sec:notations}. Then, we present the general problem formulation in Section~\ref{sec:problem_formulation}.

\subsection{Notations}\label{sec:notations}
For any positive integer $n$, we use $[n]$ to denote the set $\{1, 2, \ldots, n\}$. We use $\mathbb{N}_+$ to represent the set of all positive integers. For two sets $\mathcal{B}$ and $\mathcal{C}$, we denote the set difference as $\mathcal{B} \setminus \mathcal{C}:=\{x\in \mathcal{B}:x\notin\mathcal{C}\}$. For a vector $x \in \mathbb{R}^d$, $\Diag(d)$ denotes a diagonal matrix $X \in \mathbb{R}^{d \times d}$, where the diagonal entries satisfy $X_{i,i} = x_i$ for all $i \in [d]$, and all off-diagonal entries are zero. We use $\mathbf{1}_n$ to denote an $n$-dimensional column vector with all entries equal to one.

\begin{figure}[!ht]
    \centering
    \includegraphics[width=0.95\linewidth]{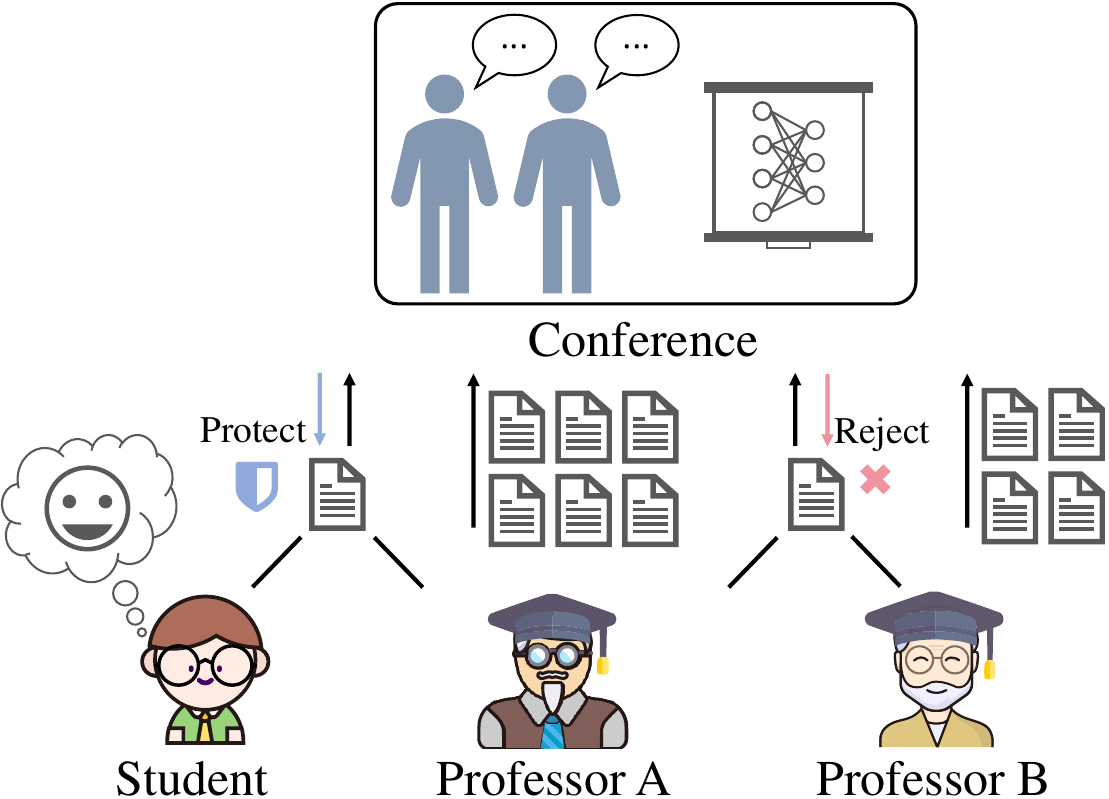}
    \caption{
    Our research objective. This figure presents the goal of our study: creating a more equitable desk-rejection system. Consider Professor A, who has carelessly submitted numerous papers exceeding the submission limit, collaborating with another senior researcher (Professor B) with many submissions, and a young student with only one paper. Our proposed system prioritizes desk-rejecting papers from authors with a large number of submissions first, thereby increasing the student’s chances of having their paper accepted. This approach aims to mitigate the disparity in the impact of desk rejections and promote fairness.
    }
    \label{fig:our_research}
\end{figure}

\subsection{Problem Formulation} \label{sec:problem_formulation}

In this section, we further introduce the actual problem we will investigate in this paper, where we begin with introducing the definition for three kinds of authors that will appear later in our discussion.

\begin{definition}[Submission Limit Problem]\label{def:submit_limit_problem}
    Let $\mathcal{A} = \{a_1, a_2, \dots, a_n\}$ denote the set of $n$ authors, and let $\mathcal{P} = \{p_1, p_2, \dots, p_m\}$ denote the set of $m$ papers. Each author $a_i \in \mathcal{A}$ has a subset of papers $P_i \subseteq \mathcal{P}$, and each paper $p_j \in \mathcal{P}$ is authored by a subset of authors $A_j \subseteq \mathcal{A}$. For each author, $a_i \in \mathcal{A}$, let $C_i$ denote the set of all coauthors of $a_i$ and let $x \in \mathbb{N}_+$ denote the maximum number of papers each author can submit. 

    The goal is to find a subset $S \subseteq \mathcal{P}$ of papers (to keep) such that for every $a_i\in\mathcal{A},$
    \begin{align*}
        \underbrace{|\{p_j \in  S : a_i \in A_j\}|}_{\#\mathrm{remained~papers~of~author}~a_i} \leq x.  
    \end{align*}
    or equivalently find a subset $\ov{S} \subseteq \mathcal{P}$ of papers (to reject) such that for every $a_i \in \mathcal{A}$,
        \begin{align*}
        |P_i| - \underbrace{|\{j \in  \ov{S} : i \in A_j\}|}_{\#{\mathrm{rejected~papers~of~author}~}a_i} \leq x.  
    \end{align*}
\end{definition}

We now present several fundamental facts related to Definition~\ref{def:submit_limit_problem}, which can be easily verified through basic set theory. 
\begin{fact}
    For any author $a_i \in \mathcal{A}$ and paper $p_j \in \mathcal{P}$, $a_i \in A_j$ if and only if $p_j \in P_i$.
\end{fact}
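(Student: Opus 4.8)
The plan is to observe that this is a direct consequence of unpacking Definition~\ref{def:submit_limit_problem}: the set $P_i$ of papers of author $a_i$ and the set $A_j$ of authors of paper $p_j$ are simply two bookkeeping views of one and the same underlying authorship relation on $\mathcal{A}\times\mathcal{P}$, and the fact merely asserts that these two views are consistent with each other.

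First I would make the implicit consistency assumption in Definition~\ref{def:submit_limit_problem} explicit. There is an authorship relation $\mathcal{R}\subseteq\mathcal{A}\times\mathcal{P}$, where $(a_i,p_j)\in\mathcal{R}$ is read as ``$a_i$ is an author of $p_j$'', such that $P_i=\{p_j\in\mathcal{P}:(a_i,p_j)\in\mathcal{R}\}$ for every $i\in[n]$ and $A_j=\{a_i\in\mathcal{A}:(a_i,p_j)\in\mathcal{R}\}$ for every $j\in[m]$. This is precisely the content of the phrases ``each author $a_i$ has a subset of papers $P_i$'' and ``each paper $p_j$ is authored by a subset of authors $A_j$'' in the definition.

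Then the argument is a short chain of equivalences. For any $a_i\in\mathcal{A}$ and $p_j\in\mathcal{P}$,
\begin{align*}
a_i\in A_j \iff (a_i,p_j)\in\mathcal{R} \iff p_j\in P_i,
\end{align*}
where the first equivalence is the defining property of $A_j$ and the second is the defining property of $P_i$. This proves both directions of the biconditional at once.

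The only ``obstacle'' here is expository rather than mathematical: one has to be careful to state the consistency requirement that ties $P_i$ and $A_j$ together, since if they were allowed to be chosen independently the claim would fail. With that requirement in hand the proof is immediate, which is exactly why the statement is recorded as a Fact rather than a Lemma.
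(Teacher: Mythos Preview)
Your proposal is correct. The paper does not actually supply a proof of this Fact; it only remarks that the statement ``can be easily verified through basic set theory,'' and your argument via the underlying authorship relation $\mathcal{R}$ is precisely such a verification made explicit.
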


\begin{fact}
    For each author $a_i \in \mathcal{A}$, the number of papers submitted by the author can be formulated as:
    \begin{align*}
        |P_i| = |\{p_j \in \mathcal{P} : a_i \in A_j\}|.
    \end{align*}
\end{fact}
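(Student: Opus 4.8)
The plan is to derive this directly from the preceding fact, which establishes the membership biconditional $a_i \in A_j \iff p_j \in P_i$ for every author $a_i \in \mathcal{A}$ and paper $p_j \in \mathcal{P}$. The key observation is that the set on the right-hand side of the claimed identity is, element by element, exactly $P_i$, so the equality of cardinalities is immediate once that set-level identity is in hand.

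First I would rewrite the right-hand set using the biconditional: for any $p_j \in \mathcal{P}$ we have $a_i \in A_j$ precisely when $p_j \in P_i$, hence
\begin{align*}
\{p_j \in \mathcal{P} : a_i \in A_j\} = \{p_j \in \mathcal{P} : p_j \in P_i\}.
\end{align*}
Next I would invoke the fact that Definition~\ref{def:submit_limit_problem} stipulates $P_i \subseteq \mathcal{P}$, so the ambient constraint $p_j \in \mathcal{P}$ is redundant and the right-hand set collapses to $P_i$ itself:
\begin{align*}
\{p_j \in \mathcal{P} : p_j \in P_i\} = P_i.
\end{align*}
Finally, taking cardinalities of both sides yields $|P_i| = |\{p_j \in \mathcal{P} : a_i \in A_j\}|$, as required.

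There is no genuine obstacle here; the argument is a routine set-theoretic verification, exactly as the surrounding text anticipates. The only point deserving a moment's care is the second step, where one must use the containment $P_i \subseteq \mathcal{P}$ to justify discarding the ambient-set condition — absent this containment, the set-builder expression could a priori be a proper subset of $P_i$. Since $P_i \subseteq \mathcal{P}$ is part of the problem setup in Definition~\ref{def:submit_limit_problem}, the fact follows immediately.
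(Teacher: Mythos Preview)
Your argument is correct and matches the paper's stance: the paper does not give a detailed proof of this fact, merely remarking that it ``can be easily verified through basic set theory,'' and your derivation via the biconditional $a_i \in A_j \iff p_j \in P_i$ together with $P_i \subseteq \mathcal{P}$ is exactly such a verification.
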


\begin{fact}
    For each paper $j \in [m]$, the number of authors of this paper can be formulated as:
    \begin{align*}
        |A_j| = |\{a_i \in \mathcal{A} : p_j \in P_i\}|.
    \end{align*}
\end{fact}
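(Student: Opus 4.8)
The plan is to reduce the claimed identity to the first \texttt{fact} stated above (the equivalence $a_i \in A_j \iff p_j \in P_i$), which I am allowed to assume. The key observation is that the set $A_j$ can be rewritten in ``builder'' notation as $A_j = \{a_i \in \mathcal{A} : a_i \in A_j\}$, since $A_j \subseteq \mathcal{A}$ by the definition of the Submission Limit Problem (Definition~\ref{def:submit_limit_problem}).

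First I would invoke the first \texttt{fact} to replace the membership condition $a_i \in A_j$ by the equivalent condition $p_j \in P_i$ inside the set-builder expression; since the two conditions are logically equivalent for every $a_i \in \mathcal{A}$, the two sets
\begin{align*}
    \{a_i \in \mathcal{A} : a_i \in A_j\} \quad \text{and} \quad \{a_i \in \mathcal{A} : p_j \in P_i\}
\end{align*}
are literally the same subset of $\mathcal{A}$. Hence $A_j = \{a_i \in \mathcal{A} : p_j \in P_i\}$ as sets.

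Finally I would take cardinalities of both sides of this set equality to obtain $|A_j| = |\{a_i \in \mathcal{A} : p_j \in P_i\}|$, which is exactly the claimed formula. There is no real obstacle here: the statement is a direct bookkeeping consequence of the author--paper incidence correspondence, and the only thing to be careful about is making the set-builder rewriting explicit rather than leaving it implicit. (One could equivalently argue by double inclusion, showing each side is contained in the other using the first \texttt{fact} in each direction, but the set-builder rewriting is cleaner.)
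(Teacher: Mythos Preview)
Your proposal is correct and aligns with the paper's treatment: the paper does not give an explicit proof for this fact, stating only that it ``can be easily verified through basic set theory,'' which is precisely what your set-builder rewriting via the incidence equivalence $a_i \in A_j \iff p_j \in P_i$ accomplishes.
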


\begin{fact}
    For each author $a_i \in \mathcal{A}$, the set of coauthors for author $a_i$ can be formulated as:
    \begin{align*}
        C_i = (\bigcup_{p_j \in P_i} A_j) \setminus \{ a_i \}.
    \end{align*}
\end{fact}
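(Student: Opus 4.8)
The plan is to prove the set equality by establishing the two inclusions $C_i \subseteq (\bigcup_{p_j \in P_i} A_j) \setminus \{a_i\}$ and $(\bigcup_{p_j \in P_i} A_j) \setminus \{a_i\} \subseteq C_i$ separately, in each case unwinding the definition of ``coauthor.'' Concretely, I will use the fact that $a_k$ is a coauthor of $a_i$ exactly when $a_k \neq a_i$ and there is at least one paper in $\mathcal{P}$ authored by both $a_i$ and $a_k$, together with the equivalence $a_i \in A_j$ if and only if $p_j \in P_i$ established earlier.

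For the forward inclusion, I would take an arbitrary $a_k \in C_i$. By the definition of coauthorship there exists a paper $p_j$ with $a_i \in A_j$ and $a_k \in A_j$; since $a_i \in A_j$, the equivalence gives $p_j \in P_i$, hence $a_k \in A_j \subseteq \bigcup_{p_\ell \in P_i} A_\ell$. Combining this with $a_k \neq a_i$ yields $a_k \in (\bigcup_{p_j \in P_i} A_j) \setminus \{a_i\}$, which is the desired containment.

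For the reverse inclusion, I would take $a_k \in (\bigcup_{p_j \in P_i} A_j) \setminus \{a_i\}$, so that $a_k \neq a_i$ and there is some $p_j \in P_i$ with $a_k \in A_j$. Applying the equivalence to $p_j \in P_i$ gives $a_i \in A_j$, so $p_j$ is a common paper of $a_i$ and $a_k$; together with $a_k \neq a_i$, this is precisely the condition for $a_k \in C_i$. The two inclusions then give the claimed identity. I do not anticipate any genuine obstacle here, since the statement is a direct definitional unwinding; the only point requiring a little care is to invoke the equivalence $a_i \in A_j \iff p_j \in P_i$ correctly in both directions and to track the explicit removal of $a_i$ from the union throughout.
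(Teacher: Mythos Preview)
Your proposal is correct and is precisely the kind of routine double-inclusion argument the paper has in mind; the paper itself does not spell out a proof, noting only that this fact ``can be easily verified through basic set theory.'' There is nothing to add: your use of the equivalence $a_i \in A_j \iff p_j \in P_i$ in both directions and the explicit exclusion of $a_i$ are exactly the two points that need attention.
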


\section{The Desk Rejection Dilemma}\label{sec:dr_dilemma}

In this section, we define the concept of an ideal desk-rejection system in Section~\ref{sec:good_solution} and formally demonstrate in Section~\ref{sec:good_solution_hard} that no algorithm can achieve this ideal system.

\subsection{Ideal Desk-Rejection}\label{sec:good_solution}
An ideal desk-rejection system should avoid unfairly rejecting papers from authors who either comply with the submission limit or exceed it by only one or two papers. Otherwise, authors may face consequences due to co-authors with an excessively high number of submissions. This issue is particularly problematic for early-career researchers, as such collective penalties can have a significant negative impact on their careers. 

To address this, we formally define the criteria for an ideal desk-rejection outcome for the problem in Definition~\ref{def:submit_limit_problem}, where rejections are based solely on an author’s excessive submissions, without unfairly penalizing others.

\begin{definition}[Ideal desk-rejection] \label{def:good_solution}
An ideal solution for the submission limit problem in Definition~\ref{def:submit_limit_problem} is a paper subset $S\subseteq \mathcal{P}$ such that every author has exactly $\min\{x, |P_i|\}$ papers remaining after desk rejection. 

\end{definition}
\begin{remark}
    The ideal desk-rejection in Definition~\ref{def:good_solution} ensures that innocent authors with less than $x$ submissions will retain all their papers, and a non-compliant author $a_i$ with more than $x$ submissions will be desk-rejected exact $(|P_i|-x)$ papers.
\end{remark}
Thus, if there exists an algorithm that can reach the aforementioned ideal solution, we can ensure that no author is unfairly penalized due to their co-authors' submission behavior, achieving both fairness and individual accountability.
\subsection{Hardness of Ideal Desk-Rejection}\label{sec:good_solution_hard}

Unfortunately, we find that achieving an ideal desk-rejection system is fundamentally intractable. The main result regarding this hardness is presented in the following theorem:

\begin{theorem}[Hardness of Ideal Desk-Rejection]\label{thm:main_res_general}
Let $n = |\mathcal{A}|$ denote the number of authors in Definition~\ref{def:submit_limit_problem}. We can show that

\begin{itemize}
    \item {\bf Part 1:} For $n \le 2$, there always exists an algorithm that can achieve the ideal desk-rejection in Definition~\ref{def:good_solution}.
    \item {\bf Part 2:} For $n \geq 3$, there exists at least one problem instance where no algorithm can guarantee achieving the ideal desk-rejection in Definition~\ref{def:good_solution}.
\end{itemize}
\end{theorem}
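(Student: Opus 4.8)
The plan is to treat the two parts separately: Part 1 by explicitly constructing an ideal solution, and Part 2 by exhibiting a small obstructive instance together with a counting (parity) obstruction.

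For \textbf{Part 1}, the case $n = 1$ is immediate: the lone author $a_1$ owns all $m = |P_1|$ papers, so keeping any $\min\{x, m\}$ of them yields the ideal solution. For $n = 2$, I would split $\mathcal{P}$ into the papers authored by $a_1$ only, by $a_2$ only, and by both, with respective sizes $\alpha$, $\beta$, $\gamma$, so that $|P_1| = \alpha + \gamma$ and $|P_2| = \beta + \gamma$. A solution is determined, up to the identity of the chosen papers, by how many it keeps from each of the three groups, say $u$, $v$, $s$; it is ideal exactly when $u + s = t_1 := \min\{x, \alpha + \gamma\}$ and $v + s = t_2 := \min\{x, \beta + \gamma\}$ subject to $0 \le u \le \alpha$, $0 \le v \le \beta$, $0 \le s \le \gamma$. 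Substituting $u = t_1 - s$ and $v = t_2 - s$ reduces this to finding an integer $s$ with $\max\{0,\, t_1 - \alpha,\, t_2 - \beta\} \le s \le \min\{\gamma,\, t_1,\, t_2\}$, and I would check that this interval is nonempty in every case by comparing the pairs of endpoints, each comparison reducing to $t_i \le x$, $t_i \le |P_i|$, nonnegativity, and a short split on whether each $t_i$ equals $x$. Choosing any valid $s$ and then any papers of the prescribed sizes in each group produces an ideal solution.

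For \textbf{Part 2}, I would use the ``triangle'' instance: $n = 3$ authors $a_1, a_2, a_3$, three papers $p_1, p_2, p_3$ with author sets $A_1 = \{a_1, a_2\}$, $A_2 = \{a_2, a_3\}$, $A_3 = \{a_3, a_1\}$, and limit $x = 1$. Every author has $|P_i| = 2 > x$, so an ideal solution $S$ would have to leave each author with exactly $\min\{x, |P_i|\} = 1$ paper. Double-counting the author--paper incidences inside $S$ gives $\sum_{i \in [3]} |\{p_j \in S : a_i \in A_j\}| = \sum_{p_j \in S} |A_j| = 2|S|$, since every paper has exactly two authors; an ideal $S$ would force this quantity to equal $1 + 1 + 1 = 3$, which is impossible because $2|S|$ is even. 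Hence the instance admits no ideal solution, so no algorithm can output one. For general $n \ge 3$, I would pad with $n - 3$ extra authors, each owning no papers (or one paper authored by no one else); they are trivially satisfied and disjoint from the triangle, so the obstruction is unaffected.

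I expect the only fussy step to be the endpoint bookkeeping in the $n = 2$ case --- confirming the interval for $s$ is nonempty across all configurations of compliant versus non-compliant authors and of $\gamma$ relative to $x$ --- whereas Part 2 is essentially one line once the triangle instance and the parity identity are identified; no step presents a genuine obstacle, and the theorem is a clean existence/non-existence dichotomy.
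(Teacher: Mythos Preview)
Your proposal is correct and complete. The overall architecture matches the paper's---Part~1 by explicit construction, Part~2 by exhibiting a bad instance---but two of your sub-arguments take a different and somewhat cleaner route.

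For $n=2$ in Part~1, the paper argues by case analysis on whether each author is non-compliant, vulnerable, or safe, and within the both-non-compliant case further splits on whether the number of joint papers exceeds $x$. Your parametrisation by $(\alpha,\beta,\gamma)$ and reduction to the nonemptiness of a single integer interval for $s$ is the same decomposition repackaged; it trades several short cases for one uniform feasibility check. Either way the bookkeeping is routine.

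For Part~2, your treatment is genuinely more economical than the paper's. The paper uses the same triangle instance for $n=3$, $x=1$, but disposes of it by tabulating all $2^3$ rejection patterns; your double-counting identity $\sum_i |\{p_j\in S: a_i\in A_j\}| = \sum_{p_j\in S}|A_j| = 2|S|$ gives the parity obstruction in one line. For $n>3$, the paper builds a separate family of instances with $m=n$, $x=n-2$, and each paper authored by all but one author, then argues that any two rejections already over-penalise someone. Your padding with $n-3$ isolated authors is simpler and suffices for the theorem as stated, since only existence of \emph{some} bad instance is claimed. The paper's construction buys slightly more: it shows the obstruction persists even when $x$ grows with $n$, rather than only at $x=1$.
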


\begin{proof} For {\bf Part 1}, the result follows directly from Lemma~\ref{lem:n_eq_1_positive_general} and Lemma~\ref{lem:n_eq_2_positive_general}. For {\bf Part 2}, the result is established using Lemma~\ref{lem:n_eq_3_negative} and Lemma~\ref{lem:n_geq_3_negative}. 
Detailed technical proofs for these lemmas are provided in Appendix~\ref{sec:dilemma_proof}.
\end{proof}

Therefore, since an ideal desk-rejection system is not achievable, it is inevitable that some authors may face excessive desk-rejections due to collective punishments. This challenge is particularly concerning for early-career researchers with only one or two submissions, motivating the need to seek an approximate solution that optimizes fairness in desk-rejection systems.

\section{Fairness-Aware Desk-Rejection}\label{sec:fair}

In this section, we first introduce two fairness metrics in Section~\ref{sec:fair_metric}, and then present the hardness result on minimizing one of them in Section~\ref{sec:good_solution_hard}. In Section~\ref{sec:fair_optim}, we show our optimization-based fairness-aware desk-rejection framework. 

\subsection{Fairness Metrics}\label{sec:fair_metric}

As discussed earlier, achieving an ideal desk-rejection system is practically infeasible, as unintended rejections due to collective punishments are unavoidable. To address this, we relax the ideal system into an approximate form, where some unfair desk-rejections are permitted, while these rejections should be proportional to each author's total number of submissions.

Specifically, we introduce a cost function for each author, which estimates the impact of desk-rejection on each author:

\begin{definition}[Cost Function]\label{def:cost}
Considering the submission limit problem in Definition~\ref{def:submit_limit_problem}, we define the cost function $c: [n] \times 2^{[m]} \to [0,1]$ for a specific author $a_i$ and a set of remaining paper $S$ as
\begin{align*}
    c(a_i, S) := \frac{|P_i| - |\{p_j \in S: a_i \in A_j\}|}{|P_i|}.
\end{align*}
\end{definition}

\begin{remark}
    The cost function $c(a_i,S)$ measures the proportion of papers authored by $a_i$ that are rejected, prioritizing fairness for early-career authors with fewer submissions and aiming to reduce setbacks for them.
\end{remark}

To further demonstrate how this author-wise cost function could benefit fairness, we present the following example:
\begin{example}
    Consider a submission limit problem with $x = 10$ and $n = 2$. Suppose author $a_1$ submits papers $p_1, p_2, \ldots, p_{11}$, and author $a_2$ submits only paper $p_{11}$. Rejecting paper $p_{11}$ (i.e., $S = \mathcal{P} \setminus \{p_{11}\}$) results in a cost of $c(a_1, S) = 1/11$ for $a_1$ but a cost of $c(a_2, S) = 1$ for $a_2$, which is unfair to $a_2$. On the other hand, if we reject paper $p_1$ (i.e., $S' = \mathcal{P} \setminus \{p_1\}$), the cost for $a_1$ remains $c(a_1, S') = 1/11$, while the cost for $a_2$ becomes $c(a_2, S') = 0$. This minimizes both the highest cost and the total cost. This example demonstrates that our cost function encourages rejecting papers from authors with many submissions while protecting authors with few submissions.
\end{example}

To ensure fair treatment for all authors and avoid imposing excessive setbacks on early-career researchers, we introduce two fairness metrics based on our cost function. These metrics are inspired by the principles of utilitarian social welfare and egalitarian social welfare~\cite{ams24}. We begin by defining individual fairness, which is a strict worst-case fairness metric that aligns with the egalitarian social welfare framework by estimating the individual cost among all authors.

\begin{definition}[Individual Fairness]\label{def:individual_fair}
Let $c: [n] \times 2^{[m]} \to [0,1] $ be the cost function defined in Definition~\ref{def:cost}.
We define function $\zeta_{\mathrm{ind}}: 2^{[m]} \to [0,1]$ to measure the individual fairness:
\begin{align*}
    & ~ \zeta_{\mathrm{ind}}(S) := \max_{i \in [n]} c(a_i,S).
\end{align*}
\end{definition}
Next, we present the concept of group fairness, which aligns with utilitarian social welfare and measures the total cost across all authors.

\begin{definition}[Group Fairness]\label{def:group_fair}
Let $c: [n] \times 2^{[m]} \to [0,1] $ be the cost function defined in Definition~\ref{def:cost}.
We define function $\zeta_{\mathrm{group}}: 2^{[m]} \to [0,1]$ to measure the group fairness:
\begin{align*}
    & ~ \zeta_{\mathrm{group}}(S) := \frac{1}{n}\sum_{i \in [n]} c(a_i,S).
\end{align*}
 \end{definition}

To show the relationship between these two fairness metrics, we have the following proposition:

\begin{proposition}[Relationship of Fairness Metrics, informal version of Proposition~\ref{lem:fair_metric_ineq_append} in Appendix~\ref{sec:fair_proof}]\label{lem:fair_metric_ineq}
    For any solution $S\subseteq \mathcal{P}$ to the submission limit problem in Definition~\ref{def:submit_limit_problem}, we have 
    \begin{align*}
        \zeta_{\mathrm{ind}}(S) \leq \zeta_{\mathrm{group}}(S).
    \end{align*}
\end{proposition}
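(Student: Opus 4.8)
The plan is to reduce the statement to an elementary comparison between a single largest per-author cost and the accumulated cost over all authors, relying on the fact that every value of the cost function is nonnegative. Recall from Definition~\ref{def:cost} that $c: [n] \times 2^{[m]} \to [0,1]$, so in particular $c(a_i, S) \ge 0$ for every author $a_i$ and every feasible $S \subseteq \mathcal{P}$. First I would fix an arbitrary solution $S$ to the submission limit problem in Definition~\ref{def:submit_limit_problem} and choose an index $i^\star \in [n]$ attaining the maximum cost, so that by Definition~\ref{def:individual_fair} we have $\zeta_{\mathrm{ind}}(S) = c(a_{i^\star}, S)$.

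The core step is then purely arithmetic. Since each of the remaining per-author costs $c(a_i, S)$ with $i \ne i^\star$ is nonnegative, appending them to the single term $c(a_{i^\star}, S)$ can only increase the running total, which yields
\begin{align*}
    \zeta_{\mathrm{ind}}(S) = c(a_{i^\star}, S) \le \sum_{i \in [n]} c(a_i, S).
\end{align*}
Identifying the right-hand side with the aggregate group cost of Definition~\ref{def:group_fair} (in the total-cost form used for the formal statement in Appendix~\ref{sec:fair_proof}) then gives $\zeta_{\mathrm{ind}}(S) \le \zeta_{\mathrm{group}}(S)$. Because $S$ was an arbitrary feasible solution, the bound holds for every solution to the submission limit problem, which is exactly the claim.

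The step I expect to require the most care is getting the aggregation convention of the group metric right, since this is what makes ``a single maximal term $\le$ the whole collection'' the correct direction of the inequality. The argument hinges on two ingredients that must be checked explicitly: the nonnegativity $c(a_i, S) \ge 0$, guaranteed by the codomain $[0,1]$ in Definition~\ref{def:cost}, and the fact that $\zeta_{\mathrm{group}}$ accumulates these nonnegative costs rather than discounting them. I would therefore verify in the formal version that the group cost is compared against $\sum_{i \in [n]} c(a_i, S)$, so that the maximal author's cost $c(a_{i^\star}, S)$ is dominated by the full accumulated cost; the nonnegativity is precisely what licenses dropping all but one term on the way down to $\zeta_{\mathrm{ind}}(S)$. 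No combinatorial properties of the coauthorship structure from Definition~\ref{def:submit_limit_problem} are needed beyond those already encoded in the cost function, so the remaining work is bookkeeping rather than any substantive obstacle.
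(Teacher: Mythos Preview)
Your argument has a genuine gap: the inequality you are asked to prove, $\zeta_{\mathrm{ind}}(S)\le\zeta_{\mathrm{group}}(S)$, is \emph{false} under the paper's actual definitions. Definition~\ref{def:group_fair} sets $\zeta_{\mathrm{group}}(S)=\frac{1}{n}\sum_{i\in[n]}c(a_i,S)$, the \emph{average} cost, not the total. Your chain $c(a_{i^\star},S)\le\sum_{i\in[n]}c(a_i,S)$ is correct, but the identification of the right-hand side with $\zeta_{\mathrm{group}}(S)$ silently drops the $1/n$ factor. With that factor present, the direction flips: a maximum always dominates an average, so for example with $n=2$, $c(a_1,S)=1$, $c(a_2,S)=0$ one gets $\zeta_{\mathrm{ind}}(S)=1>1/2=\zeta_{\mathrm{group}}(S)$.

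The resolution is that the informal statement in the main text is a typo. The formal version (Proposition~\ref{lem:fair_metric_ineq_append}) asserts the reverse inequality $\zeta_{\mathrm{group}}(S)\le\zeta_{\mathrm{ind}}(S)$, and the paper's proof is the standard ``average $\le$ max'' bound: replace each summand $c(a_i,S)$ by $\max_{i'}c(a_{i'},S)$ and cancel the $n$. This is also consistent with the surrounding text in Section~\ref{sec:fair_optim}, which calls group fairness a \emph{lower bound} for individual fairness. You flagged the aggregation convention as the step needing care, and that instinct was right; had you actually carried out the verification you proposed, you would have seen that the formal appendix version keeps the $1/n$ and reverses the inequality rather than switching to a total-cost convention.
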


\subsection{Hardness of Individual Fairness-Aware Submission Limit Problem}\label{sec:indi_fair_hard}

After presenting fairness metrics for the desk-rejection system, we introduce an optimization-based framework to address these metrics. We first study the individual fairness-aware submission limit problem to minimize the individual fairness measure $\zeta_{\mathrm{ind}}$ in Definition~\ref{def:individual_fair}.

\begin{definition}[Individual Fairness-Aware Submission Limit Problem]\label{def:ind_fair_min}
    We consider the following optimization problem:
\begin{align*}
    & ~ \min_{S \subseteq \mathcal{P}} \zeta_{\mathrm{ind}}(S) \\
    \mathrm{s.t.} & ~ |\{p_j \in S : a_i \in A_j\}| \leq x, \quad \forall a_i \in \mathcal{A}.
\end{align*}
\end{definition}

To represent the fairness metric minimization problem in matrix form, we introduce the following definition:

\begin{definition}[Author-Paper Matrix]\label{def:author_paper_mat}
    Let $W \in \{0, 1\}^{n \times m}$ denote the author-paper matrix for the author set $\mathcal{A}$ and paper set $\mathcal{P}$. Then, we define $W_{i,j} = 1$ if author $a_i$ is a coauthor of paper $p_j$, and $W_{i,j} = 0$ otherwise.
\end{definition}

Therefore, we present a more tractable integer
programming form of the original problem and prove its equivalence to the original
formulation:

\begin{definition}[Individual Fairness-Aware Submission Limit Problem, Matrix Form]\label{def:ind_fair_min_matrix}
    We consider the following integer optimization problem:
    \begin{align*}
        & ~ \min_{r \in \{0,1\}^m} \| \mathbf{1}_n - D^{-1}Wr\|_\infty \\
    \mathrm{s.t.} & ~ 
     (W r) / x \leq \mathbf{1}_n
    \end{align*}
    where $D = \Diag(|P_1|, \cdots, |P_n|)$, and the rejection vector $r \in \{0, 1\}^m$ is a 0-1 vector, with $r_j = 1$ indicating that paper $p_j$ is remained, and $r_j = 0$ indicating that it is desk-rejected. 
\end{definition}

\begin{proposition}[Matrix Form Equivalence for $\zeta_{\mathrm{ind}}$, informal version of Proposition~\ref{prop:equiv_individual_append} in Appendix~\ref{sec:fair_proof}]\label{prop:equiv_individual}
    The individual fairness-aware submission limit problem in Definition~\ref{def:ind_fair_min} and the matrix form integer programming problem in Definition~\ref{def:ind_fair_min_matrix} are equivalent.
\end{proposition}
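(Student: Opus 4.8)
The plan is to establish the equivalence of the two optimization problems in Definition~\ref{def:ind_fair_min} and Definition~\ref{def:ind_fair_min_matrix} by exhibiting a bijection between their feasible solutions that preserves the objective value. The natural bijection maps a paper subset $S \subseteq \mathcal{P}$ to its indicator vector $r \in \{0,1\}^m$ defined by $r_j = 1$ if $p_j \in S$ and $r_j = 0$ otherwise; this is clearly a bijection between $2^{[m]}$ and $\{0,1\}^m$. The bulk of the argument is then to verify that (i) the constraints correspond, and (ii) the objectives agree under this correspondence.

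For the constraints, I would observe that by Definition~\ref{def:author_paper_mat} the $i$-th entry of $Wr$ equals $\sum_{j=1}^{m} W_{i,j} r_j = |\{p_j \in S : a_i \in A_j\}|$, i.e.\ the number of remaining papers of author $a_i$. Hence the vector inequality $(Wr)/x \le \mathbf{1}_n$ holds coordinatewise if and only if $|\{p_j \in S : a_i \in A_j\}| \le x$ for every $a_i \in \mathcal{A}$, which is exactly the constraint in Definition~\ref{def:ind_fair_min}. For the objective, the $i$-th coordinate of $D^{-1}Wr$ is $\frac{1}{|P_i|}|\{p_j \in S : a_i \in A_j\}|$, so the $i$-th coordinate of $\mathbf{1}_n - D^{-1}Wr$ is $1 - \frac{|\{p_j \in S : a_i \in A_j\}|}{|P_i|} = \frac{|P_i| - |\{p_j \in S : a_i \in A_j\}|}{|P_i|} = c(a_i,S)$ by Definition~\ref{def:cost}. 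Taking the $\ell_\infty$ norm gives $\|\mathbf{1}_n - D^{-1}Wr\|_\infty = \max_{i\in[n]} c(a_i,S) = \zeta_{\mathrm{ind}}(S)$ by Definition~\ref{def:individual_fair}. Since the feasible regions are in bijection and the objective values match pointwise, the two minimization problems have the same optimal value and corresponding optimal solutions, establishing equivalence.

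A few technical points deserve care. First, $D$ is well-defined and invertible only if $|P_i| > 0$ for all $i$; I would note that any author with no papers is vacuously irrelevant and can be removed, or simply assume $|P_i|\ge 1$ without loss of generality (an author with zero submissions never triggers desk-rejection and contributes $c(a_i,S)=0/0$, which is conventionally treated as $0$). Second, I should be explicit that all quantities $|\{p_j \in S : a_i \in A_j\}|$ are nonnegative integers, so $c(a_i,S) \in [0,1]$ and the codomain claims in the definitions are consistent. The main (and only mild) obstacle is bookkeeping: making sure the identification of $Wr$ with the per-author remaining-paper counts is stated cleanly and that the $\ell_\infty$ norm is correctly matched to the $\max$ in $\zeta_{\mathrm{ind}}$; there is no combinatorial or algorithmic difficulty here, just a careful unwinding of definitions. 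I would present the full version in Appendix~\ref{sec:fair_proof} as Proposition~\ref{prop:equiv_individual_append}.
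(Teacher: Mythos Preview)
Your proposal is correct and follows essentially the same approach as the paper: establish the bijection $S \leftrightarrow r$ via indicator vectors, then verify that the constraints match through the identity $(Wr)_i = |\{p_j \in S : a_i \in A_j\}|$ (which the paper isolates as Fact~\ref{fact:author_paper_count}) and that the objectives agree by unwinding $\|\mathbf{1}_n - D^{-1}Wr\|_\infty$ to $\max_{i\in[n]} c(a_i,S) = \zeta_{\mathrm{ind}}(S)$. If anything, your treatment is slightly more careful than the paper's in explicitly noting the nonnegativity of $c(a_i,S)$ (needed so that the $\ell_\infty$ norm equals the plain max rather than a max of absolute values) and in flagging the invertibility of $D$.
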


Unfortunately, solving this integer programming problem is highly non-trivial, which means it may not yield a feasible solution within a reasonable time for large-scale conference submission systems. We establish the computational hardness of this problem in the following theorem:
 
\begin{theorem}[Hardness, informal version of Theorem~\ref{thm:indi_nphard_append} in Appendix~\ref{sec:indi_fair_hard_append}]\label{thm:indi_nphard}
    The Individual Fairness-Aware Submission Limit Problem defined in Definition~\ref{def:ind_fair_min} is $\NPhard$.
\end{theorem}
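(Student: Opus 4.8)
The plan is to reduce a known \NPhard{} problem to the Individual Fairness-Aware Submission Limit Problem. Since the objective $\zeta_{\mathrm{ind}}(S) = \max_{i} c(a_i,S)$ asks us to minimize the worst-case fraction of rejected papers over all authors, and the feasibility constraint forces every author down to at most $x$ remaining papers, the combinatorial heart of the problem is choosing \emph{which} papers to reject so that the rejections are spread as evenly as possible (in the proportional sense) across authors. This ``balanced coverage'' flavor strongly suggests a reduction from a problem like \textsc{Vertex Cover}, \textsc{Set Cover}, \textsc{3-Dimensional Matching}, or \textsc{Exact Cover by 3-Sets} — in each case the decision version asks whether one can select a bounded-size collection of sets/edges hitting elements under tight cardinality conditions. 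I would aim for \textsc{Vertex Cover} or \textsc{3-Dimensional Matching} as the source, since both have rigid counting structure that maps cleanly onto ``author $a_i$ loses exactly $|P_i| - x$ papers.''

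Concretely, first I would fix the decision version: given a threshold $\theta \in [0,1]$, decide whether there is a feasible $S$ with $\zeta_{\mathrm{ind}}(S) \le \theta$; \NPhard{}ness of this decision problem implies \NPhard{}ness of the optimization problem. Second, given an instance of the source problem (say a graph $G=(V,E)$ and integer $k$ for \textsc{Vertex Cover}), I would build an author-paper bipartite structure: introduce one paper per edge, gadget authors encoding vertices, and auxiliary ``padding'' papers/authors so that (a) the submission limit $x$ is just barely violated in a controlled way, (b) the only way to satisfy all feasibility constraints while keeping $\zeta_{\mathrm{ind}}$ below the target $\theta$ is to reject a set of papers whose incidence pattern corresponds exactly to a vertex cover of size $\le k$. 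The padding authors are the main design lever: by giving each ``real'' author a carefully chosen number of private (single-author) papers, I can tune $|P_i|$ so that the proportional cost $c(a_i,S)$ crosses $\theta$ precisely when $a_i$'s edge-papers are handled correctly. Third, I would prove the two directions of the reduction: a vertex cover of size $\le k$ yields a feasible $S$ with $\zeta_{\mathrm{ind}}(S) \le \theta$, and conversely any feasible $S$ meeting the threshold can be decoded into a vertex cover of size $\le k$ (using the cost bound to control how many edge-papers get rejected and the feasibility constraint to ensure all edges are covered).

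The main obstacle, and where I would spend the most care, is the arithmetic of the gadget: the cost function is a \emph{ratio} $(|P_i| - |\{p_j\in S: a_i\in A_j\}|)/|P_i|$ rather than an absolute count, so to force a clean threshold I need the denominators $|P_i|$ to be uniform (or at least commensurate) across the authors that matter, which typically requires equalizing paper counts with padding and then choosing $\theta$ to be a specific rational like $1/|P_i|$ or $2/|P_i|$. A secondary subtlety is that $\zeta_{\mathrm{ind}}$ is a max over \emph{all} authors including padding authors, so I must ensure the padding authors never become the bottleneck — e.g. by giving them enough papers that their proportional cost stays well below $\theta$ regardless of which real-author papers are rejected, or by making their papers un-rejectable (shared with many authors) so rejecting them is never beneficial. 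Once the denominators and the threshold are pinned down, both directions of the equivalence should follow from routine counting, and combining this with the polynomial-time constructibility of the gadget and with Proposition~\ref{prop:equiv_individual} (to transfer hardness to the matrix form if desired) completes the argument.
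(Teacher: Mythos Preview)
Your high-level strategy --- reduce from an \NPhard{} covering problem, pass to the decision version at a well-chosen threshold $\theta$, and verify both directions --- is correct and matches the paper in spirit. The execution, however, differs substantially.

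The paper reduces from \textsc{Set Cover} rather than \textsc{Vertex Cover} or \textsc{3DM}, and it avoids almost all of the gadgetry you anticipate. Given a universe $[n]$ and sets $S_1,\dots,S_m$, it sets $W_{i,j}=\mathbf{1}[i\in S_j]$, lets $|P_i|$ be the $i$-th row sum, and chooses $x=m$ so that the feasibility constraint $(Wr)/x\le\mathbf{1}_n$ is \emph{vacuous}. The cover-size bound is imposed as an extra constraint $\|r\|_1\le K$, and the threshold is chosen (via Lemma~\ref{lem:useful_lemma}) so that $\zeta_{\mathrm{ind}}\le\theta$ is equivalent to $(Wr)_i\ge 1$ for every $i$, i.e., every universe element is covered. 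There is no padding, no denominator equalization, and the submission limit plays no active role in the gadget.

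What each approach buys: the paper's route is short, but it inserts a constraint $\|r\|_1\le K$ that is not literally part of Definition~\ref{def:ind_fair_min_matrix} and dispatches this with the one-line claim that the resulting problem ``is easier.'' Making that step airtight (for instance, by adding a super-author on all $m$ papers to encode $\|r\|_1\le K$ and then checking that this author never dominates the $\max$) is precisely the ``padding authors must not become the bottleneck'' issue you already flagged. Your gadget-based plan, if carried out, would stay strictly inside the problem as stated and would not need that external step --- at the cost of the ratio arithmetic you correctly identify as the main obstacle. So your instincts about where the delicacy lies are right; the paper simply relocates that delicacy rather than eliminating it.
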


Since minimizing individual fairness is computationally intractable, our fairness-aware desk-rejection system instead focuses on minimizing group fairness.

\subsection{Group Fairness Optimization}\label{sec:fair_optim}

Given the inherent hardness of individual fairness optimization, we address the fairness problem using an alternative yet equally important metric: group fairness, as defined in Definition~\ref{def:group_fair}. This metric is not only a crucial fairness measure in its own right but also serves as a lower bound for individual fairness as stated in Proposition~\ref{lem:fair_metric_ineq}, potentially improving individual fairness implicitly. 

Following a similar approach in Section~\ref{sec:indi_fair_hard}, we first formulate the submission limit problem with respect to group fairness and derive a more tractable integer programming formulation in matrix form:

\begin{definition}[Group Fairness-Aware Submission Limit Problem]\label{def:group_fair_min}
    We consider the following optimization problem:
\begin{align*}
    & ~ \min_{S \subseteq \mathcal{P}} \zeta_{\mathrm{group}}(S) \\
    \mathrm{s.t.} & ~ |\{p_j \in S : a_i \in A_j\}| \leq x, \quad \forall a_i \in \mathcal{A}.
\end{align*}
\end{definition}

\begin{definition}[Group Fairness-Aware Submission Limit Problem, Matrix Form]\label{def:group_fair_min_mat_new}
    We consider the following integer programming problem:
    \begin{align*}
        &~ \max_{r \in \{0, 1\}^m} \mathbf{1}^\top_n D^{-1} W r \\ 
        \mathrm{s.t.} 
        & ~ (W r) / x \leq \mathbf{1}_n,
    \end{align*}
    where $D = \Diag(|P_1|, \cdots, |P_n|)$, and the rejection vector $r \in \{0, 1\}^m$ is a 0-1 vector, with $r_j = 1$ indicating that paper $p_j$ is remained, and $r_j = 0$ indicating that it is desk-rejected. 
\end{definition}

\begin{proposition}[Matrix Form Equivalence for $\zeta_{\mathrm{group}}$, informal version of Proposition~\ref{lem:group_fair_min_equiv_append} in Appendix~\ref{sec:fair_proof}]\label{lem:group_fair_min_equiv}
    The fairness-aware submission limit problem in Definition~\ref{def:group_fair_min} and the matrix form integer programming problem in Definition~\ref{def:group_fair_min_mat_new} are equivalent.
\end{proposition}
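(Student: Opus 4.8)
The plan is to exhibit an explicit bijection between the feasible solutions of the set-optimization problem in Definition~\ref{def:group_fair_min} and those of the integer program in Definition~\ref{def:group_fair_min_mat_new}, under which the objective of one is a fixed orientation-reversing affine function of the objective of the other; then minimizers of the former correspond exactly to maximizers of the latter.

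First I would fix the correspondence $S \leftrightarrow r$: to each $S \subseteq \mathcal{P}$ associate $r = r(S) \in \{0,1\}^m$ with $r_j = 1$ iff $p_j \in S$. This is manifestly a bijection between $2^{[m]}$ and $\{0,1\}^m$. Using Definition~\ref{def:author_paper_mat} together with the basic fact $a_i \in A_j \iff p_j \in P_i$, I would record the single identity that drives everything:
\[
(Wr)_i \;=\; \sum_{j=1}^{m} W_{i,j}\, r_j \;=\; \sum_{j:\, a_i \in A_j} r_j \;=\; \bigl|\{p_j \in S : a_i \in A_j\}\bigr| ,
\]
i.e.\ the $i$-th coordinate of $Wr$ is exactly the number of retained papers of author $a_i$. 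Taking $S = \mathcal{P}$ gives $(W\mathbf{1}_m)_i = |P_i| \ge 1$ for every $i$ (each author has submitted at least one paper), so $D = \Diag(|P_1|,\dots,|P_n|)$ is invertible and $D^{-1} W r$ has $i$-th coordinate $(Wr)_i / |P_i|$.

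Next I would match constraints and objectives via this identity. The constraint in Definition~\ref{def:group_fair_min}, $|\{p_j \in S : a_i \in A_j\}| \le x$ for all $a_i \in \mathcal{A}$, is by the identity precisely $(Wr)_i \le x$ for all $i \in [n]$, i.e.\ $(Wr)/x \le \mathbf{1}_n$, the constraint of Definition~\ref{def:group_fair_min_mat_new}; hence $r(\cdot)$ restricts to a bijection between the two feasible sets. For the objective, substituting Definition~\ref{def:cost} into Definition~\ref{def:group_fair} and applying the identity,
\[
\zeta_{\mathrm{group}}(S) \;=\; \frac1n \sum_{i \in [n]} \frac{|P_i| - (Wr)_i}{|P_i|} \;=\; 1 - \frac1n \sum_{i \in [n]} \frac{(Wr)_i}{|P_i|} \;=\; 1 - \frac1n\, \mathbf{1}_n^\top D^{-1} W r .
\]
Since $t \mapsto 1 - t/n$ is strictly decreasing, minimizing $\zeta_{\mathrm{group}}(S)$ over feasible $S$ is equivalent to maximizing $\mathbf{1}_n^\top D^{-1} W r$ over feasible $r$; combined with the feasible-set bijection this yields that $S^\star$ solves Definition~\ref{def:group_fair_min} iff $r(S^\star)$ solves Definition~\ref{def:group_fair_min_mat_new}, with optimal values related by $\mathrm{opt}(\text{Def.~\ref{def:group_fair_min}}) = 1 - \tfrac1n\,\mathrm{opt}(\text{Def.~\ref{def:group_fair_min_mat_new}})$.

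There is no genuinely hard step here — the proof is essentially the bookkeeping of one counting identity. The only points I would be careful about are: (i) invertibility of $D$, which relies on the standing assumption $|P_i| \ge 1$ for every author (if some author had $|P_i| = 0$, the cost function of Definition~\ref{def:cost} would itself be undefined, so this is implicit throughout); and (ii) being explicit that the single quantity $(Wr)_i$ plays two roles at once — it is both the left-hand side of the submission-limit constraint and the retained-count inside $c(a_i, S)$ — so that one identity simultaneously matches the constraints and the objectives. Reading ``equivalent'' as ``feasible regions in bijection and objectives related by a fixed monotone affine map, hence identical optimal solution sets under the bijection,'' the computation above constitutes a complete proof.
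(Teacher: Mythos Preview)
Your proposal is correct and follows essentially the same approach as the paper: both establish the bijection $S \leftrightarrow r$, use the counting identity $(Wr)_i = |\{p_j \in S : a_i \in A_j\}|$ (the paper records this separately as Fact~\ref{fact:author_paper_count}) to match constraints, and then show the objectives are related by the strictly decreasing affine map $\zeta_{\mathrm{group}}(S) = 1 - \tfrac{1}{n}\mathbf{1}_n^\top D^{-1} W r$. Your write-up is in fact slightly more careful than the paper's in explicitly addressing the invertibility of $D$ and in articulating what ``equivalent'' means, but there is no substantive difference in strategy.
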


However, solving integer programming problems is practically challenging. To this end, we first relax the feasible region of $r$ to $[0,1]^m$, and then analyze the resulting relaxed problem.  

\begin{definition}[Group Fairness-Aware Submission Limit Problem, Relaxation]\label{def:group_fair_min_mat_relax_new}
    We consider the optimization problem
    \begin{align*}
    &~ \max_{r \in [0, 1]^m}  \mathbf{1}^\top_nD^{-1}Wr \\ 
        \mathrm{s.t.} 
        & ~ (Wr)/x\le \mathbf{1}_n,
\end{align*}
where $D = \Diag(|P_1|, \cdots, |P_n|)$, and the rejection vector $r \in \{0, 1\}^m$ is a 0-1 vector, with $r_j = 1$ indicating that paper $p_j$ is remained, and $r_j = 0$ indicating that it is desk-rejected. 
\end{definition}

Fortunately, the relaxed problem is a linear program, which can be efficiently solved using standard linear programming solvers. Moreover, its optimal solution is equivalent to that of the original integer programming problem, an this result is formalized in the following theorem:

\begin{theorem}[Optimal Solution Equivalence of the Relaxed Problem, informal version of Theorem~\ref{thm:lp_equiv_append} in Appendix~\ref{sec:fair_proof}]\label{thm:lp_equiv}
    The optimal solution of the relaxed linear programming problem in Definition~\ref{def:group_fair_min_mat_relax_new} is equivalent to the optimal solution of the original integer programming problem in Definition~\ref{def:group_fair_min_mat_new}.
\end{theorem}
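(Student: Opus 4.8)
The plan is to establish the two inequalities that pin the two optima together. The easy direction is immediate: every $r\in\{0,1\}^m$ feasible for the integer program in Definition~\ref{def:group_fair_min_mat_new} is also feasible for the relaxation in Definition~\ref{def:group_fair_min_mat_relax_new} and has the same objective value, so $\OPT_{\mathrm{LP}}\ge \OPT_{\mathrm{IP}}$. The real work is to produce an \emph{integral} optimal solution of the LP; such a point certifies $\OPT_{\mathrm{IP}}\ge\OPT_{\mathrm{LP}}$ and simultaneously serves as a common optimizer, which (after translating back through Proposition~\ref{lem:group_fair_min_equiv}) yields an optimal paper subset for the group-fairness-aware problem in Definition~\ref{def:group_fair_min}.

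First I would put the LP in polyhedral form: its feasible region is $\mathcal{R}=\{r\in\R^m: Wr\le x\mathbf{1}_n,\ r\ge 0,\ r\le \mathbf{1}_m\}$, which is nonempty (it contains the all-zeros vector) and bounded, and the objective $r\mapsto \mathbf{1}_n^\top D^{-1}Wr=(W^\top D^{-1}\mathbf{1}_n)^\top r$ is linear (with nonnegative coefficients, since $D^{-1}$ has positive diagonal and $W\ge 0$). Hence the maximum is attained at a vertex (basic feasible solution) of $\mathcal{R}$. The crux is then Step 2: showing every vertex of $\mathcal{R}$ is integral. Since the right-hand side vector is integral (its entries are $x$, $1$, and $0$), this reduces to proving that the constraint matrix obtained by stacking $W$ on top of $I_m$ is totally unimodular, whereupon the Hoffman--Kruskal theorem gives integrality of all vertices of $\mathcal{R}$.

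The main obstacle is precisely this total-unimodularity claim for the author--paper matrix $W$ of Definition~\ref{def:author_paper_mat}. I would attack it with the Ghouila--Houri criterion: for every subset $R$ of the rows of the stacked matrix, exhibit a partition of $R$ into two classes so that in each column the difference of the two class-sums lies in $\{-1,0,1\}$; the appended identity rows are handled trivially, so the content is to partition any set of author-rows of $W$, which one does by exploiting the combinatorial structure of the co-authorship incidence (for instance, by realizing the relevant submatrices as network or interval matrices under the structural hypotheses carried in the appendix version, Theorem~\ref{thm:lp_equiv_append}). Once vertex-integrality is established, Step 3 is immediate: pick a vertex-optimal $r^\star\in\{0,1\}^m$; it is feasible for the integer program with the same objective value, so $\OPT_{\mathrm{IP}}\ge \OPT_{\mathrm{LP}}\ge \OPT_{\mathrm{IP}}$, forcing equality and making $r^\star$ optimal for both formulations.

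As a fallback if direct verification of total unimodularity is delicate, I would instead argue by uncrossing: take an optimal basic solution $r^\star$ of the LP with the fewest fractional coordinates, and if $r^\star_j\in(0,1)$ for some $j$, move along a direction in the kernel of the currently tight constraints to drive some fractional coordinate to $0$ or $1$ while leaving the objective unchanged (using that the objective is constant on the optimal face), contradicting minimality — but this line of reasoning again ultimately rests on the same combinatorial properties of $W$, so I expect the total-unimodularity step to be the true heart of the proof.
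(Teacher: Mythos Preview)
Your overall strategy---reduce to vertex integrality and then invoke total unimodularity via Hoffman--Kruskal---is the right framing, and it is considerably more careful than the paper's own argument, which simply cites the fundamental theorem of linear programming and then asserts without justification that every extreme point of the feasible polytope has $r_j\in\{0,1\}$. The problem is that the total-unimodularity step fails in general, and in fact the theorem as stated is false. Take the instance from Lemma~\ref{lem:n_eq_3_negative}: $n=m=3$, $x=1$, with
\[
W=\begin{pmatrix}1&1&0\\1&0&1\\0&1&1\end{pmatrix},
\]
which has determinant $-2$ and is therefore not totally unimodular. In the relaxed LP the point $r=(\tfrac12,\tfrac12,\tfrac12)$ is feasible---indeed a vertex, since the three inequalities $Wr\le\mathbf{1}_3$ are all tight and determine $r$ uniquely---with objective $\mathbf{1}_3^\top D^{-1}Wr = r_1+r_2+r_3 = \tfrac32$ (here $D=2I_3$). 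Every integer-feasible $r\in\{0,1\}^3$ can have at most one coordinate equal to $1$, so $\OPT_{\mathrm{IP}}=1<\tfrac32=\OPT_{\mathrm{LP}}$, and no common optimizer exists.

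So the gap is not in your execution but in the claim itself: the author--paper matrix $W$ of Definition~\ref{def:author_paper_mat} is an arbitrary $0$--$1$ matrix, and neither Definition~\ref{def:group_fair_min_mat_new} nor the appendix version carries any structural hypothesis that would force a network or interval structure. Your hedge that Ghouila--Houri might go through ``under the structural hypotheses carried in the appendix version'' is unfounded---there are none. Any correct version of the theorem would need either additional assumptions on $W$ or a weakening to a rounding/approximation guarantee; the paper's proof and your proposal (including the uncrossing fallback) both break at the same unprovable integrality step.
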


\begin{algorithm}[!ht]
\caption{Fairness-Aware Desk-Reject Algorithm}
\label{alg:fair_desk_reject_algo}
\begin{algorithmic}[1]

\State {\color{blue} /* $\mathcal{A}$ denotes the set of $n$ authors. */}
\State {\color{blue} /* $\mathcal{P}$ denote the set of $m$ papers. */}
\State {\color{blue} /* Author $a_i \in \mathcal{N}$ has a subset of papers $P_i \subset \mathcal{P}$. */}
\State {\color{blue} /* Paper $p_j \in \mathcal{P}$ is coauthored by a subset of authors $A_j \subseteq \mathcal{A}$.*/}
\State {\color{blue} /* $x$ represents the submission limit for each author.*/}

\Procedure{FairDeskReject}{$\mathcal{A}, \mathcal{P}, x$} 
\State {\color{blue} /* Initialize the constants of the problem. */}
\For{$i \in [n], j \in [m]$}
\If{$p_j \in \mathcal{A}_i$}
\State $W_{i,j}\gets 1$
\Else
\State $W_{i,j}\gets 0$
\EndIf
\EndFor
\State $D\gets\Diag(|P_1|, \ldots, |P_n|)$
\State {\color{blue} /* Solve the linear programming problem in Definition~\ref{def:group_fair_min_mat_relax_new}. */}
\State $r^{\star} \gets \mathsf{LPSolver}(W, D, x, r^0)$
\State {\color{blue} /* Transform the solution. */}
\State $S\gets\emptyset$
\For {$j\in[m]$}
\If {$r_j = 1$}
\State $S\gets S \cup \{p_j\}$
\EndIf
\EndFor
\State \Return $S$ 
\EndProcedure
\end{algorithmic}
\end{algorithm}

This theoretical result is significant as we formally establish that the group fairness-aware submission problem in Definition~\ref{def:group_fair_min} reduces to a linear programming (LP) problem with guaranteed optimality, solvable using off-the-shelf LP solvers. We formalize this procedure in Algorithm~\ref{alg:fair_desk_reject_algo}, where $\mathsf{LPSolver}$ denotes any standard LP solver, including but not limited to the simplex method~\cite{bg69}, interior-point path-finding methods~\cite{ls14}, and state-of-the-art stochastic central path methods~\cite{cls19, jswz21}.

\begin{remark}
    The time complexity of our fairness-aware desk-rejection algorithm in Algorithm~\ref{alg:fair_desk_reject_algo} aligns with modern linear programming solvers. For instance, using the stochastic central path method~\cite{cls21,jswz21,vlss20,sy21}, it achieves a time complexity of $O^*(m^{2.37} \log(m/\delta))$, where $\delta$ represents the relative accuracy corresponding to a \((1+\delta)\)-approximation guarantee.
\end{remark}

\begin{remark}
    In practice, major AI conferences routinely process submissions at the scale of $m \sim 10^4$~\cite{stanford_ai_index}. Given this regime, our algorithm guarantees efficient computation, enabling fairness-aware desk rejection within tractable timeframes, even for large-scale conferences.
\end{remark}

\begin{algorithm}[!ht]
\caption{Conventional Desk-Reject Algorithm}
\label{alg:conventional_desk_reject_algo}
\begin{algorithmic}[1]

\Procedure{DeskRejct}{$\mathcal{A}, \mathcal{P}, x$} 
\State {\color{blue} /*  Initialize registered paper set for each author. */}
\For {$i = 1 \to n$}
\State $R_i \gets \emptyset$
\EndFor
\State {\color{blue} /* Initialize the subset of remaining papers. */}
\State $S \gets \mathcal{P}$
\State {\color{blue} /*  Process each paper in submission order.\. */}
\For {$j = 1 \to m$}
\For {$i \in A_j$}
\State {\color{blue} /* If author $a_i$ has reached the submission limit, the paper will be rejected.*/}
\If {$|R_i| \geq x$}
\State $S \gets S \setminus \{p_j\}$
\State \textbf{break}
\EndIf
\EndFor
\State {\color{blue} /* If paper $p_j$ is not rejected, we add it to each co-author's registered paper set.*/}
\If{$p_j \in S$}
\For {$i \in A_j$}
\State $R_i \gets R_i \cup \{ j \}$
\EndFor
\EndIf
\EndFor
\State \Return $S$
\EndProcedure
\end{algorithmic}
\end{algorithm}

\section{Case Study}\label{sec:case_study}

Since desk-rejection data from top AI conferences is not publicly available, and fully open-review conferences like ICLR do not impose submission limits, evaluating real-world conference submissions is impractical. Therefore, we present a case study to demonstrate how our proposed desk-rejection algorithm more effectively addresses fairness issues. Additional case studies are provided in Appendix~\ref{sec:more_case_study}.

Let the paper subscript $j$ in $p_j \in \mathcal{P}$ denote the submission order. We analyze the widely used desk-rejection system (e.g., CVPR 2025) in Algorithm~\ref{alg:conventional_desk_reject_algo}, which rejects all papers submitted after an author’s $x$-th submission. To highlight its limitations, we present a minimal working example:

\begin{example}
    Consider a submission limit problem as defined in Definition~\ref{def:submit_limit_problem} with $n=2$, $x=25$, and $m=26$. Author $a_1$ submits all papers $p_1, \ldots, p_{26}$, while author $a_2$ submits only $p_{26}$.
\end{example}

Given the ideal desk-rejection criteria in Definition~\ref{def:good_solution}, it is evident that we can reject any papers in $\{p_1, p_2, \ldots, p_{25}\}$ following the techniques in Lemma~\ref{lem:n_eq_2_positive}. After rejection, since $a_1$ retains 25 papers and $a_2$ retains 1 paper, the fairness metrics are $\zeta_{\mathrm{ind}}(S) = \max\{1/26, 0\} = 1/26$ and $\zeta_{\mathrm{group}}(S) = \frac{1}{2}(1/26 + 0) = 1/52$.

On the other hand, the CVPR 2025 algorithm, as described in Algorithm~\ref{alg:conventional_desk_reject_algo}, rejects $p_{26}$, retaining $S = \{p_1, \ldots, p_{25}\}$. This unfairly penalizes $a_2$, resulting in $\zeta_{\mathrm{ind}}(S) = \max\{1/26, 1\}= 1$ and $\zeta_{\mathrm{group}}(S) = \frac{1}{2}(1/26 + 1) = 27/52$,
which is much worse compared with the ideal results. In contrast, our method in Algorithm~\ref{alg:fair_desk_reject_algo} solves the linear program and recovers the ideal solution, achieving the same fairness metrics as the optimal case.

A simple workaround to mitigate unfairness in conventional desk-rejection systems is the roulette algorithm, which randomly rejects papers from non-compliant authors like $a_1$ until the submission limit $x$ is reached. However, this heuristic cannot fully prevent the rejection of the undesirable paper $p_{26}$ and results in suboptimal fairness outcomes compared to our fairness-aware rejection, since the expected fairness metrics under the roulette algorithm satisfy $\mathbb{E}[\zeta_{\mathrm{ind}}] = (25/26) \cdot(1/26) + (1/26) \cdot 1 \leq 1/26$ and $\mathbb{E}[\zeta_{\mathrm{group}}] = (25/26) \cdot (1/52) + (1/26) \cdot (27/52) \leq 27/52$.

Thus, this example illustrates that conventional desk-rejection systems in top conferences such as CVPR can suffer from severe fairness issues, whereas our proposed method effectively mitigates these problems. 

Additionally, this example also highlights another noteworthy consequence of the conventional desk-rejection system. Specifically, authors collaborating with senior researchers who have numerous submissions may have to compete for earlier submission slots to avoid desk rejection. However, the submission order should not influence whether a paper is accepted, which reveals the unintended implications of the order-based desk-rejection system.

\section{Conclusion} \label{sec:conclusion}

In this work, we identify the fairness issue in the desk-rejection mechanisms of AI conferences under submission limits.
Our theoretical analysis proves that an ideal system that rejects papers solely based on authors' non-compliance, without unfairly penalizing others due to collective punishment, is impossible. We further consider an optimization-based fairness-aware desk-rejection system to alleviate the unfairness problem. In this system, we considered two fairness metrics: individual fairness and group fairness. We formally established that optimizing individual fairness in desk-rejection is NP-hard, while optimizing group fairness can be reduced to a linear programming problem that can be solved highly efficiently. Through case studies, we showed that the proposed method outperforms the existing desk-rejection system with submission limits in top conferences. For future works, we plan to collaborate with conference organizers to implement and validate our system in real-world settings, contributing to a more equitable AI community. 

\ifdefined\isarxiv
\else
\clearpage
\section*{Impact Statement}
This paper seeks to advance fairness in desk-rejection systems employed by top AI conferences. By formally defining the paper submission limit problem, we demonstrate that an ideal system that rejects papers solely based on excessive submissions without negatively impacting innocent authors is mathematically impossible when multiple authors are involved. To address this, we propose a framework that optimizes desk-rejection fairness based on two different metrics. This ensures that early-career researchers with fewer submissions are less likely to face disproportionate setbacks due to co-authors' submission behavior. While introducing fairness may slightly impact the acceptance rates of senior researchers with a large number of submissions, our approach does not aim to diminish anyone’s research output. Instead, it seeks to balance opportunity across career stages, thereby advancing social justice and contributing to a fairer and more inclusive ML research ecosystem.
\fi

\ifdefined\isarxiv
\bibliographystyle{alpha}
\bibliography{ref} 
\else
\bibliography{ref}

\newcommand{\etalchar}[1]{$^{#1}$}
\begin{thebibliography}{TdSAKKBC18}

\bibitem[AAA{\etalchar{+}}23]{aaa+23}
Josh Achiam, Steven Adler, Sandhini Agarwal, Lama Ahmad, Ilge Akkaya, Florencia~Leoni Aleman, Diogo Almeida, Janko Altenschmidt, Sam Altman, Shyamal Anadkat, et~al.
\newblock Gpt-4 technical report.
\newblock {\em arXiv preprint arXiv:2303.08774}, 2023.

\bibitem[Ahm22]{a22}
Nur Ahmed.
\newblock Scientific labor market and firm-level appropriation strategy in artificial intelligence research.
\newblock Technical report, MIT Sloan Working Paper, 2022.

\bibitem[AMS23]{ams23}
Haris Aziz, Evi Micha, and Nisarg Shah.
\newblock Group fairness in peer review.
\newblock {\em Advances in Neural Information Processing Systems}, 36, 2023.

\bibitem[AMS24]{ams24}
Haris Aziz, Evi Micha, and Nisarg Shah.
\newblock Group fairness in peer review.
\newblock {\em Advances in Neural Information Processing Systems}, 36, 2024.

\bibitem[Ant24]{a24}
Anthropic.
\newblock The claude 3 model family: Opus, sonnet, haiku, 2024.

\bibitem[AS21]{as21}
Ben~W Ansell and David~J Samuels.
\newblock Desk rejecting: A better use of your time.
\newblock {\em PS: Political Science \& Politics}, 54(4):686--689, 2021.

\bibitem[BBM{\etalchar{+}}24]{bbm+24}
Tamay Besiroglu, Sage~Andrus Bergerson, Amelia Michael, Lennart Heim, Xueyun Luo, and Neil Thompson.
\newblock The compute divide in machine learning: A threat to academic contribution and scrutiny?
\newblock {\em arXiv preprint arXiv:2401.02452}, 2024.

\bibitem[BDK{\etalchar{+}}23]{bdk+23}
Andreas Blattmann, Tim Dockhorn, Sumith Kulal, Daniel Mendelevitch, Maciej Kilian, Dominik Lorenz, Yam Levi, Zion English, Vikram Voleti, Adam Letts, et~al.
\newblock Stable video diffusion: Scaling latent video diffusion models to large datasets.
\newblock {\em arXiv preprint arXiv:2311.15127}, 2023.

\bibitem[BDVvdR18]{bvr18}
Thijs Bol, Mathijs De~Vaan, and Arnout van~de Rijt.
\newblock The matthew effect in science funding.
\newblock {\em Proceedings of the National Academy of Sciences}, 115(19):4887--4890, 2018.

\bibitem[Ben20]{b20blog}
Yoshua Bengio.
\newblock Time to rethink the publication process in machine learning.
\newblock \url{https://yoshuabengio.org/2020/02/26/time-to-rethink-the-publication-proce\\ss-in-machine-learning/}, 2020.

\bibitem[BG69]{bg69}
Richard~H Bartels and Gene~H Golub.
\newblock The simplex method of linear programming using lu decomposition.
\newblock {\em Communications of the ACM}, 12(5):266--268, 1969.

\bibitem[CLS19]{cls19}
Michael~B Cohen, Yin~Tat Lee, and Zhao Song.
\newblock Solving linear programs in the current matrix multiplication time.
\newblock In {\em Proceedings of the 51st Annual ACM SIGACT Symposium on Theory of Computing}, pages 938--942, 2019.

\bibitem[CLS21]{cls21}
Michael~B Cohen, Yin~Tat Lee, and Zhao Song.
\newblock Solving linear programs in the current matrix multiplication time.
\newblock {\em Journal of the ACM (JACM)}, 68(1):1--39, 2021.

\bibitem[EDB{\etalchar{+}}22]{edb+22}
Michael~D Ekstrand, Anubrata Das, Robin Burke, Fernando Diaz, et~al.
\newblock Fairness in information access systems.
\newblock {\em Foundations and Trends{\textregistered} in Information Retrieval}, 16(1-2):1--177, 2022.

\bibitem[ER23]{er23}
Faisal~R Elali and Leena~N Rachid.
\newblock Ai-generated research paper fabrication and plagiarism in the scientific community.
\newblock {\em Patterns}, 4(3), 2023.

\bibitem[Fra12]{f12}
Nissim Francez.
\newblock {\em Fairness}.
\newblock Springer Science \& Business Media, 2012.

\bibitem[Gil93]{g93}
Stephen~W Gilliland.
\newblock The perceived fairness of selection systems: An organizational justice perspective.
\newblock {\em Academy of management review}, 18(4):694--734, 1993.

\bibitem[GJ79]{gj79}
Michael~R Garey and David~S Johnson.
\newblock {\em Computers and Intractability: A Guide to the Theory of NP-Completeness}, volume 174.
\newblock freeman San Francisco, 1979.

\bibitem[GLG{\etalchar{+}}21]{glg+21}
Yingqiang Ge, Shuchang Liu, Ruoyuan Gao, Yikun Xian, Yunqi Li, Xiangyu Zhao, Changhua Pei, Fei Sun, Junfeng Ge, Wenwu Ou, et~al.
\newblock Towards long-term fairness in recommendation.
\newblock In {\em Proceedings of the 14th ACM international conference on web search and data mining}, pages 445--453, 2021.

\bibitem[GRB{\etalchar{+}}24]{grb+24}
Isabel~O Gallegos, Ryan~A Rossi, Joe Barrow, Md~Mehrab Tanjim, Sungchul Kim, Franck Dernoncourt, Tong Yu, Ruiyi Zhang, and Nesreen~K Ahmed.
\newblock Bias and fairness in large language models: A survey.
\newblock {\em Computational Linguistics}, pages 1--79, 2024.

\bibitem[GYA22]{gya22}
Yue Guo, Yi~Yang, and Ahmed Abbasi.
\newblock Auto-debias: Debiasing masked language models with automated biased prompts.
\newblock In {\em Proceedings of the 60th Annual Meeting of the Association for Computational Linguistics (Volume 1: Long Papers)}, pages 1012--1023, 2022.

\bibitem[HJA20]{hja20}
Jonathan Ho, Ajay Jain, and Pieter Abbeel.
\newblock Denoising diffusion probabilistic models.
\newblock {\em Advances in neural information processing systems}, 33:6840--6851, 2020.

\bibitem[HSG{\etalchar{+}}22]{hsg+22}
Jonathan Ho, Tim Salimans, Alexey Gritsenko, William Chan, Mohammad Norouzi, and David~J Fleet.
\newblock Video diffusion models.
\newblock {\em Advances in Neural Information Processing Systems}, 35:8633--8646, 2022.

\bibitem[HZRS16]{hzrs16}
Kaiming He, Xiangyu Zhang, Shaoqing Ren, and Jian Sun.
\newblock Deep residual learning for image recognition.
\newblock In {\em Proceedings of the IEEE conference on computer vision and pattern recognition}, pages 770--778, 2016.

\bibitem[JAWD02]{jawd02}
Tom Jefferson, Philip Alderson, Elizabeth Wager, and Frank Davidoff.
\newblock Effects of editorial peer review: a systematic review.
\newblock {\em Jama}, 287(21):2784--2786, 2002.

\bibitem[JSWZ21]{jswz21}
Shunhua Jiang, Zhao Song, Omri Weinstein, and Hengjie Zhang.
\newblock A faster algorithm for solving general lps.
\newblock In {\em Proceedings of the 53rd Annual ACM SIGACT Symposium on Theory of Computing}, pages 823--832, 2021.

\bibitem[Kar72]{k72}
Richard~M Karp.
\newblock Reducibility among combinatorial problems.
\newblock {\em Complexity of Computer Computations}, pages 85--103, 1972.

\bibitem[KC23]{kc23}
Michael~R King and ChatGPT.
\newblock A conversation on artificial intelligence, chatbots, and plagiarism in higher education.
\newblock {\em Cellular and molecular bioengineering}, 16(1):1--2, 2023.

\bibitem[LBNZ{\etalchar{+}}24]{lnz+24}
Kevin Leyton-Brown, Yatin Nandwani, Hedayat Zarkoob, Chris Cameron, Neil Newman, Dinesh Raghu, et~al.
\newblock Matching papers and reviewers at large conferences.
\newblock {\em Artificial Intelligence}, 331:104119, 2024.

\bibitem[LCF{\etalchar{+}}21]{lcf+21}
Yunqi Li, Hanxiong Chen, Zuohui Fu, Yingqiang Ge, and Yongfeng Zhang.
\newblock User-oriented fairness in recommendation.
\newblock In {\em Proceedings of the web conference 2021}, pages 624--632, 2021.

\bibitem[LCX{\etalchar{+}}21]{lcx+21}
Yunqi Li, Hanxiong Chen, Shuyuan Xu, Yingqiang Ge, and Yongfeng Zhang.
\newblock Towards personalized fairness based on causal notion.
\newblock In {\em Proceedings of the 44th International ACM SIGIR Conference on Research and Development in Information Retrieval}, pages 1054--1063, 2021.

\bibitem[Leo13]{l13}
Seth~S Leopold.
\newblock Duplicate submission and dual publication: what is so wrong with them?, 2013.

\bibitem[LS14]{ls14}
Yin~Tat Lee and Aaron Sidford.
\newblock Path finding methods for linear programming: Solving linear programs in o (vrank) iterations and faster algorithms for maximum flow.
\newblock In {\em 2014 IEEE 55th Annual Symposium on Foundations of Computer Science}, pages 424--433. IEEE, 2014.

\bibitem[LY{\etalchar{+}}84]{ly84}
David~G Luenberger, Yinyu Ye, et~al.
\newblock {\em Linear and nonlinear programming}, volume~2.
\newblock Springer, 1984.

\bibitem[MMS{\etalchar{+}}21]{mms+21}
Ninareh Mehrabi, Fred Morstatter, Nripsuta Saxena, Kristina Lerman, and Aram Galstyan.
\newblock A survey on bias and fairness in machine learning.
\newblock {\em ACM computing surveys (CSUR)}, 54(6):1--35, 2021.

\bibitem[MRPC{\etalchar{+}}21]{mrp+21}
Alessandro~B Melchiorre, Navid Rekabsaz, Emilia Parada-Cabaleiro, Stefan Brandl, Oleg Lesota, and Markus Schedl.
\newblock Investigating gender fairness of recommendation algorithms in the music domain.
\newblock {\em Information Processing \& Management}, 58(5):102666, 2021.

\bibitem[RKH{\etalchar{+}}21]{rkh+21}
Alec Radford, Jong~Wook Kim, Chris Hallacy, Aditya Ramesh, Gabriel Goh, Sandhini Agarwal, Girish Sastry, Amanda Askell, Pamela Mishkin, Jack Clark, et~al.
\newblock Learning transferable visual models from natural language supervision.
\newblock In {\em International conference on machine learning}, pages 8748--8763. PMLR, 2021.

\bibitem[SCL23]{scl23}
Ye~Sun, Fabio Caccioli, and Giacomo Livan.
\newblock Ranking mobility and impact inequality in early academic careers.
\newblock {\em Proceedings of the National Academy of Sciences}, 120(34):e2305196120, 2023.

\bibitem[SME20]{sme20}
Jiaming Song, Chenlin Meng, and Stefano Ermon.
\newblock Denoising diffusion implicit models.
\newblock {\em arXiv preprint arXiv:2010.02502}, 2020.

\bibitem[Sta24]{stanford_ai_index}
Stanford.
\newblock Research and development - ai index, 2024.

\bibitem[Sto03]{s03}
WR~Stone.
\newblock Plagiarism, duplicate publication, and duplicate submission: They are all wrong!
\newblock {\em IEEE Antennas and Propagation Magazine}, 45(4):47--49, 2003.

\bibitem[SY21]{sy21}
Zhao Song and Zheng Yu.
\newblock Oblivious sketching-based central path method for linear programming.
\newblock In {\em Proceedings of the 38th International Conference on Machine Learning}, volume 139 of {\em Proceedings of Machine Learning Research}, pages 9835--9847, 2021.

\bibitem[SZK{\etalchar{+}}22]{szk+22}
John Schulman, Barret Zoph, Christina Kim, Jacob Hilton, Jacob Menick, Jiayi Weng, Juan Felipe~Ceron Uribe, Liam Fedus, Luke Metz, Michael Pokorny, et~al.
\newblock Chatgpt: Optimizing language models for dialogue.
\newblock {\em OpenAI blog}, 2(4), 2022.

\bibitem[TdSAKKBC18]{takb18}
Jaime~A Teixeira~da Silva, Aceil Al-Khatib, Vedran Katavi{\'c}, and Helmar Bornemann-Cimenti.
\newblock Establishing sensible and practical guidelines for desk rejections.
\newblock {\em Science and Engineering Ethics}, 24:1347--1365, 2018.

\bibitem[Ten18]{t18}
Jonathan~P Tennant.
\newblock The state of the art in peer review.
\newblock {\em FEMS Microbiology letters}, 365(19):fny204, 2018.

\bibitem[vdBLSS20]{vlss20}
Jan van~den Brand, Yin~Tat Lee, Aaron Sidford, and Zhao Song.
\newblock Solving tall dense linear programs in nearly linear time.
\newblock In {\em Proceedings of the 52nd Annual ACM SIGACT Symposium on Theory of Computing}, pages 775--788, 2020.

\bibitem[VSP{\etalchar{+}}17]{vsp+17}
Ashish Vaswani, Noam Shazeer, Niki Parmar, Jakob Uszkoreit, Llion Jones, Aidan~N Gomez, {\L}ukasz Kaiser, and Illia Polosukhin.
\newblock Attention is all you need.
\newblock {\em Advances in neural information processing systems}, 30, 2017.

\bibitem[WCZ{\etalchar{+}}23]{wcz+23}
Yilin Wang, Zeyuan Chen, Liangjun Zhong, Zheng Ding, Zhizhou Sha, and Zhuowen Tu.
\newblock Dolfin: Diffusion layout transformers without autoencoder.
\newblock {\em arXiv preprint arXiv:2310.16305}, 2023.

\bibitem[WHZ20]{whz20}
Ruotong Wang, F~Maxwell Harper, and Haiyi Zhu.
\newblock Factors influencing perceived fairness in algorithmic decision-making: Algorithm outcomes, development procedures, and individual differences.
\newblock In {\em Proceedings of the 2020 CHI conference on human factors in computing systems}, pages 1--14, 2020.

\bibitem[WJW19]{wjw19}
Yang Wang, Benjamin~F Jones, and Dashun Wang.
\newblock Early-career setback and future career impact.
\newblock {\em Nature communications}, 10(1):4331, 2019.

\bibitem[WMM{\etalchar{+}}22]{wmm+22}
Haolun Wu, Bhaskar Mitra, Chen Ma, Fernando Diaz, and Xue Liu.
\newblock Joint multisided exposure fairness for recommendation.
\newblock In {\em Proceedings of the 45th International ACM SIGIR Conference on research and development in information retrieval}, pages 703--714, 2022.

\bibitem[WSD{\etalchar{+}}24]{wsd+24}
Zirui Wang, Zhizhou Sha, Zheng Ding, Yilin Wang, and Zhuowen Tu.
\newblock Tokencompose: Text-to-image diffusion with token-level supervision.
\newblock In {\em Proceedings of the IEEE/CVF Conference on Computer Vision and Pattern Recognition}, pages 8553--8564, 2024.

\bibitem[WXZ{\etalchar{+}}24]{wxz+24}
Yilin Wang, Haiyang Xu, Xiang Zhang, Zeyuan Chen, Zhizhou Sha, Zirui Wang, and Zhuowen Tu.
\newblock Omnicontrolnet: Dual-stage integration for conditional image generation.
\newblock In {\em Proceedings of the IEEE/CVF Conference on Computer Vision and Pattern Recognition}, pages 7436--7448, 2024.

\bibitem[ZFH{\etalchar{+}}21]{zfh+21}
Yang Zhang, Fuli Feng, Xiangnan He, Tianxin Wei, Chonggang Song, Guohui Ling, and Yongdong Zhang.
\newblock Causal intervention for leveraging popularity bias in recommendation.
\newblock In {\em Proceedings of the 44th international ACM SIGIR conference on research and development in information retrieval}, pages 11--20, 2021.

\end{thebibliography}
\bibliographystyle{icml2025}

\fi

\newpage
\onecolumn
\appendix

\clearpage
\newpage
\begin{center}
    \textbf{\LARGE Appendix}
\end{center}

\paragraph{Roadmap.} In Section~\ref{sec:dilemma_proof}, we supplement the missing proofs in Section~\ref{sec:dr_dilemma}. In Section~\ref{sec:fair_proof}, we present the missing proofs in Section~\ref{sec:fair}. In Section~\ref{sec:more_case_study}, we show additional case studies. In Section~\ref{app:sec:conference_links}, we provide the details related to conference submission limits. 

\section{Missing Proofs in Section 4} \label{sec:dilemma_proof}

In this section, we provide the complete technical proofs for Theorem~\ref{thm:main_res_general} in Section~\ref{sec:dr_dilemma}. 
In Section~\ref{sec:dilemma_proof_defs}, we first introduce key definitions that will be useful 
To structure our analysis, we in the subsequent proofs. We then establish positive results for the cases where $n \leq 2$ in Section~\ref{sec:positive_results}, followed by negative results for $n \geq 3$ in Section~\ref{sec:negative_results}.

\subsection{Basic Definitions}\label{sec:dilemma_proof_defs}

To systematically analyze the desk-rejection problem, we begin by classifying authors based on their submission behavior and their relationship to co-authors. This classification will help us organize and present the proofs in a more structured and readable manner.

\begin{definition}[Author Categories]\label{def:three_kinds_authors}
For any author $a_i \in \mathcal{A}$, we define the following categories:
\begin{itemize}
    \item \textbf{Non-compliant}: An author $a_i$ is non-compliant if they have submitted more than $x$ papers, i.e., $|P_i| > x$. Such authors exceed the submission limit and are subject to desk-rejection under the policy.

    \item \textbf{Vulnerable}: An author $a_i$ is vulnerable if they have submitted no more than $x$ papers ($|P_i| \leq x$) but have at least one non-compliant co-author, i.e., $\exists k \in C_i$ such that $|P_k| > x$. Although these authors comply with the submission limit, they are at risk of being unfairly penalized due to their co-authors' non-compliance.

    \item \textbf{Safe}: An author $a_i$ is safe if they have submitted no more than $x$ papers ($|P_i| \leq x$) and all their co-authors are also compliant, i.e., $\forall k \in C_i$, $|P_k| \leq x$. These authors are guaranteed to retain all their submissions, as neither they nor their co-authors violate the submission limit.
\end{itemize}
\end{definition}

Next, we formalize the notion of achievability for the ideal desk-rejection system.

\begin{definition}[Achievability]\label{def:achievability}
Given a submission limit problem instance as defined in Definition~\ref{def:submit_limit_problem}:
\begin{itemize}
    \item \textbf{Positive result}: A problem instance is a positive result if there exists an algorithm that can achieve the ideal desk-rejection as defined in Definition~\ref{def:good_solution}.
    
    \item \textbf{Negative result}: A problem instance is a negative result if,  under proper conditions, no algorithm can achieve the ideal desk-rejection as defined in Definition~\ref{def:good_solution}.
\end{itemize}
\end{definition}

In the following sections, we will use these definitions to systematically prove the positive results for small numbers of authors ($n \leq 2$) and the negative results for larger numbers of authors ($n \geq 3$), which covers two cases in Theorem~\ref{thm:main_res_general}.

\subsection{Positive Results} \label{sec:positive_results}
In this subsection, we present two positive results that support the $n \leq 2$ case in Theorem~\ref{thm:main_res_general}. We begin with the positive result for $n = 1$ and any $x \in \mathbb{N}_+$.

\begin{lemma}[Positive result for $n = 1$ and any $x \in \mathbb{N}_+$, general case]\label{lem:n_eq_1_positive_general}
    If the following conditions hold:
    \begin{itemize}
        \item Let $n = 1$ denote the number of authors as defined in Definition~\ref{def:submit_limit_problem}.
        \item Let $x \in \mathbb{N}_+$ denote the maximum number of submissions allowed for each author in the conference.
    \end{itemize}
    Then, there exists an algorithm that achieves the ideal desk-rejection as defined in Definition~\ref{def:good_solution}.
\end{lemma}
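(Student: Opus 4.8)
The statement to prove is Lemma~\ref{lem:n_eq_1_positive_general}: for $n=1$ and any $x \in \mathbb{N}_+$, there is an algorithm achieving the ideal desk-rejection of Definition~\ref{def:good_solution}. The plan is to exhibit the algorithm explicitly and verify the ideal property directly from the definitions. With a single author $a_1$, we have $\mathcal{P} = P_1$, and the set of coauthors $C_1 = \emptyset$, so there is no possibility of collective punishment; the only question is whether we can reduce $a_1$'s count to exactly $\min\{x, |P_1|\}$.

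First I would split into two cases. If $|P_1| \le x$, the algorithm outputs $S = \mathcal{P}$; then $a_1$ retains $|P_1| = \min\{x,|P_1|\}$ papers, and the submission-limit constraint $|\{p_j \in S : a_1 \in A_j\}| = |P_1| \le x$ holds, so this is an ideal solution. If $|P_1| > x$, the algorithm picks an arbitrary subset $S \subseteq \mathcal{P}$ with $|S| = x$ (every paper in $\mathcal{P}$ is authored by $a_1$, so any $x$ of them work); then $a_1$ retains exactly $x = \min\{x,|P_1|\}$ papers, again meeting Definition~\ref{def:good_solution}. In both cases the output satisfies the required cardinality condition for the unique author, so the algorithm achieves the ideal desk-rejection.

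There is essentially no main obstacle here — the lemma is the base case of the positive-results induction, and the argument is a direct unwinding of Definition~\ref{def:good_solution} together with the observation that with one author every paper is "his" and can be freely dropped. The only point worth stating carefully is that when $|P_1| > x$ the choice of which $|P_1| - x$ papers to reject is unconstrained (there are no other authors whose counts could be violated), which is exactly why the single-author case is trivially solvable and stands in contrast to the $n \ge 3$ negative results established later via Lemma~\ref{lem:n_eq_3_negative} and Lemma~\ref{lem:n_geq_3_negative}.
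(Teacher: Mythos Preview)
Your proof is correct and takes essentially the same approach as the paper: both split on whether $|P_1| \le x$ or $|P_1| > x$, keep everything in the first case, and drop an arbitrary $|P_1|-x$ papers in the second. The only cosmetic difference is that the paper phrases the case split using the author-category taxonomy of Definition~\ref{def:three_kinds_authors} (safe / vulnerable / non-compliant), noting that the vulnerable case is vacuous when $n=1$, whereas you go directly to the cardinality condition.
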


\begin{proof}
We consider the three cases for the only author $a_1$: non-compliant, vulnerable, and safe, as defined in Definition~\ref{def:three_kinds_authors}.

\paragraph{Case 1: Non-compliant author.} If author $a_1$ is non-compliant, we desk-reject $(|P_1| - x)$ papers. This ensures that exactly $x$ papers remain, satisfying the ideal desk-rejection condition.

\paragraph{Case 2: Vulnerable author.} Since $n = 1$ and there is only one author, author $a_1$ has no co-authors to make itself vulnerable. Therefore, this case cannot happen.

\paragraph{Case 3: Safe author.} If author $a_1$ is safe, no papers need to be rejected. The ideal desk-rejection condition is trivially satisfied.

In all possible cases, we can achieve the ideal desk-rejection. Thus, the proof is finished. 
\end{proof}

To present the positive result for $n=2$ and any $x \in \mathbb{N}_+$, we first discuss a specific case where all authors are non-compliant.

\begin{lemma} [Positive result for $n=2$ and any $x\in\mathbb{N}_+$, non-compliant author only case] \label{lem:n_eq_2_positive}
If the following conditions hold:
\begin{itemize}
    \item Let $n = 1$ denote the number of authors as defined in Definition~\ref{def:submit_limit_problem}.
    \item All the authors are non-compliant authors as defined in Definition~\ref{def:three_kinds_authors}.
    \item Let $x \in \mathbb{N}_+$ denote the maximum number of submissions allowed for each author in the conference. 
\end{itemize}

Then, there exists an algorithm that achieves the ideal desk-rejection as defined in Definition~\ref{def:good_solution}.
\end{lemma}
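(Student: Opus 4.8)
The plan is to prove the statement for the two non-compliant authors named in the lemma's title; the itemized ``$n=1$'' appears to be a carryover from Lemma~\ref{lem:n_eq_1_positive_general}, and for a genuine single non-compliant author $a_1$ the claim is immediate (reject any $|P_1|-x$ of $a_1$'s papers, leaving exactly $\min\{x,|P_1|\}=x$ papers and affecting no other author). So I take $\mathcal{A}=\{a_1,a_2\}$ with $|P_1|>x$ and $|P_2|>x$. First I would partition the papers by their author set: let $\mathcal{P}_1,\mathcal{P}_2,\mathcal{P}_{12}$ be, respectively, the papers authored by $a_1$ alone, by $a_2$ alone, and by both, with sizes $s_1,s_2,j$. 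Counting each paper against its authors gives $|P_1|=s_1+j$ and $|P_2|=s_2+j$.

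Next, I would reduce the construction of an ideal rejection set to choosing integers $r_1\le s_1$, $r_2\le s_2$, and $r_{12}\le j$ (the numbers of solo-$a_1$, solo-$a_2$, and joint papers to reject) subject to $r_1+r_{12}=d_1$ and $r_2+r_{12}=d_2$, where $d_i:=|P_i|-x\ge 1$ is author $a_i$'s deficit. Any feasible choice leaves exactly $x$ papers for each author, which is precisely the ideal outcome of Definition~\ref{def:good_solution}. Writing $t:=r_{12}$ and substituting $r_1=d_1-t$, $r_2=d_2-t$, the constraints collapse to a single interval $t\in[\,\max\{0,\,j-x\},\ \min\{j,\,d_1,\,d_2\}\,]$, using the identities $d_1-s_1=d_2-s_2=j-x$.

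I would then finish with a two-case verification that this interval contains an integer, which makes the rejection rule explicit. If $j\le x$, I take $t=0$ and reject $d_1$ solo papers of $a_1$ and $d_2$ solo papers of $a_2$ (feasible since $j\le x$ forces $d_i\le s_i$). If $j>x$, I take $t=j-x$ and reject every solo paper of both authors together with $j-x$ joint papers (feasible since $0<j-x\le\min\{j,d_1,d_2\}$). In either case each author keeps exactly $x$ papers, establishing the ideal desk-rejection and hence the existence of the desired algorithm.

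The step I expect to be the main obstacle is the coupling created by the joint papers: rejecting a joint paper decrements both authors' counts simultaneously, so the two deficit equations cannot be satisfied independently. The heart of the argument is to show that the admissible interval for the shared variable $t=r_{12}$ is always nonempty; this rests on the identities $d_i-s_i=j-x$ together with the strict positivity $d_1,d_2\ge 1$ guaranteed by non-compliance, which jointly exclude an empty interval. Once nonemptiness is in hand, selecting the integer endpoint $t=0$ or $t=j-x$ gives the explicit construction and completes the proof.
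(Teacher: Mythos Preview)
Your proof is correct and follows essentially the same approach as the paper: partition papers into solo-$a_1$, solo-$a_2$, and joint classes (the paper's $b_1,b_2,c$ are your $s_1,s_2,j$), split into the two cases $j\le x$ and $j>x$, and in each case construct the same explicit rejection set (reject only solo papers when $j\le x$; reject all solo papers plus $j-x$ joint papers when $j>x$). Your interval parametrization by $t=r_{12}$ is a mild repackaging but lands on the identical endpoints and constructions the paper uses.
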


\begin{proof}
Let $c \in \mathbb{N}$ denote the number of papers co-authored by both author $a_1$ and author $a_2$. For $i \in \{1, 2\}$, let $b_i \in \mathbb{N}$ denote the number of single-authored papers by author $a_i$. 

We then have:
\begin{align*}
    b_1 + c = |P_1|
\end{align*}
and
\begin{align*}
    b_2 + c = |P_2|.
\end{align*}

\paragraph{Case 1: $c \leq x$.} 
In this case, we have $b_1 \geq |P_1| - x$ and $b_2 \geq |P_2| - x$. Since $b_i$ represents the number of single-authored papers by author $a_i$, we can desk reject exactly $(|P_i| - x)$ papers from author $a_i$.

\paragraph{Case 2: $c > x$.} 
Here, we have $b_1 < |P_1| - x$ and $b_2 < |P_2| - x$. We first desk reject all $b_1$ single-authored papers from author $a_1$ and all $b_2$ single-authored papers from author $a_2$. Next, we desk reject $(c - x)$ co-authored papers from both authors. This ensures that the remaining $x$ papers are co-authored by both $a_1$ and $a_2$. Thus, we have successfully rejected exactly $(|P_i| - x)$ papers from each author $a_i$.

By combining the two cases above, the proof is complete.
\end{proof}

With the help of Lemma~\ref{lem:n_eq_2_positive}, we now establish the positive result for $n=2$ and any $x \in \mathbb{N}_+$.

\begin{lemma} [Positive result for $n=2$ and any $x\in\mathbb{N}_+$, general case] \label{lem:n_eq_2_positive_general}
If the following conditions hold:
\begin{itemize}
    \item Let $n = 1$ denote the number of authors as defined in Definition~\ref{def:submit_limit_problem}.
    \item Let $x \in \mathbb{N}_+$ denote the maximum number of submissions allowed for each author in the conference. 
\end{itemize}

Then, there exists an algorithm that achieves the ideal desk-rejection as defined in Definition~\ref{def:good_solution}.
\end{lemma}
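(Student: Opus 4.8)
The plan is to prove the general $n=2$ positive result by a case analysis on the compliance status of the two authors $a_1, a_2$, reducing each case either to a trivial observation or to the already-established Lemma~\ref{lem:n_eq_2_positive}. Recall that ideal desk-rejection (Definition~\ref{def:good_solution}) requires each author $a_i$ to retain exactly $\min\{x, |P_i|\}$ papers; equivalently, a compliant author with $|P_i| \le x$ must lose none of their papers, while a non-compliant author with $|P_i| > x$ must lose exactly $|P_i| - x$ of them.

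First I would dispose of the two extreme cases. If both authors are compliant, no rejection is needed: keeping $S = \mathcal{P}$ leaves each author with all $|P_i| = \min\{x, |P_i|\}$ papers, so the ideal condition holds trivially. If both authors are non-compliant, the instance satisfies exactly the hypotheses of Lemma~\ref{lem:n_eq_2_positive}, which already constructs an ideal desk-rejection; I would simply invoke it rather than reprove it.

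The substantive case is when exactly one author is non-compliant; by symmetry assume $|P_1| > x$ and $|P_2| \le x$. Here the ideal solution must reject exactly $|P_1| - x$ of $a_1$'s papers while rejecting none of $a_2$'s, so every rejected paper must be single-authored by $a_1$. Writing $c$ for the number of papers shared by both authors and $b_1 = |P_1| - c$ for $a_1$'s single-authored papers, the plan is to reject $|P_1| - x$ of these single-authored papers. The point that must be verified is that enough such papers exist, i.e. $b_1 \ge |P_1| - x$, equivalently $c \le x$. This is exactly where the compliance of $a_2$ does the work: since $|P_2| = b_2 + c \le x$ with $b_2 \ge 0$, we obtain the key bound
\begin{align*}
    c \le |P_2| \le x,
\end{align*}
so $b_1 = |P_1| - c \ge |P_1| - x$ and the required rejections are available. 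After removing $|P_1| - x$ single-authored papers of $a_1$, author $a_1$ retains exactly $x$ papers and $a_2$ retains all $|P_2|$ papers, meeting Definition~\ref{def:good_solution}.

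The main (and essentially only) obstacle is this feasibility check in the mixed case: one must rule out the scenario in which $a_1$'s excess papers are all jointly authored with $a_2$, which would force a rejection that harms the compliant author $a_2$. The inequality $c \le x$ shows this scenario cannot arise precisely because $a_2$ is compliant, and assembling the three cases then completes the argument.
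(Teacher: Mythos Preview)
Your proposal is correct and follows essentially the same approach as the paper: both proceed by case analysis on the authors' compliance status, invoke Lemma~\ref{lem:n_eq_2_positive} when both authors are non-compliant, and in the mixed case use the key inequality $c \le |P_2| \le x$ to guarantee that $a_1$ has enough single-authored papers to absorb all rejections. The only cosmetic difference is that the paper further subdivides the compliant author into ``safe'' versus ``vulnerable'' (yielding four cases instead of your three), but your coarser compliant/non-compliant split already captures everything needed, since your argument for the mixed case works uniformly regardless of whether $a_2$ actually shares papers with $a_1$.
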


\begin{proof}
We consider two authors, $a_1$ and $a_2$. Without loss of generality, we assume that $a_1$ has at least as many papers as $a_2$, i.e., $|P_1| \geq |P_2|$. By exhaustively enumerating all possible compositions of author types (i.e., non-compliant, vulnerable, or safe) for $a_1$ and $a_2$, we observe that the vulnerable-safe composition is impossible. This is because a vulnerable author must co-author at least one paper with a non-compliant author. After excluding this case, we analyze the remaining possible scenarios as follows:

\paragraph{Case 1: Both $a_1$ and $a_2$ are safe authors.} 
In this case, no papers need to be rejected, and the ideal desk-rejection trivially holds.

\paragraph{Case 2: $a_1$ is a non-compliant author and $a_2$ is a safe author.} 
Since rejecting papers from $a_1$ does not affect $a_2$'s submissions, we can simply reject $(|P_1| - x)$ papers from $a_1$ to achieve the ideal desk-rejection.

\paragraph{Case 3: $a_1$ is a non-compliant author and $a_2$ is a vulnerable author.} 
By Definition~\ref{def:three_kinds_authors}, we have $|P_1| > x$ and $|P_2| \le x$. Let $c := |\{p_j \in S: p_j \in P_1, p_j \in P_2\}|$ denote the number of co-authored papers by $a_1$ and $a_2$. From basic set theory, we know that $c \leq |P_2|$. Since $|P_2| \le x$, it follows that $c \le x$. Therefore, we have:
\begin{align*}
    \underbrace{|P_1| - c}_{\text{Individual papers of } a_1} \ge \underbrace{|P_1| - x}_{\text{Excess papers of } a_1},
\end{align*}
which implies that the number of individual papers authored solely by $a_1$ exceeds the number of over-limit papers for $a_1$. Thus, we can first reject $a_1$'s individual papers without affecting $a_2$'s submissions, thereby achieving the desired ideal desk-rejection.

\paragraph{Case 4: Both $a_1$ and $a_2$ are non-compliant authors.} 
This case directly follows from Lemma~\ref{lem:n_eq_2_positive}.

Combining all the cases above, we conclude that the ideal desk-rejection can always be achieved, which finishes the proof.
\end{proof}

\subsection{Negative Results} \label{sec:negative_results}
In this subsection, we present two positive results that support the $n \ge 3$ case in Theorem~\ref{thm:main_res_general}. We commence by showing the negative result for $n = 3$ and $x=1$.

\begin{lemma}[Negative result for $n=3$ and $x=1$] \label{lem:n_eq_3_negative}
If the following conditions hold:
\begin{itemize}
    \item Let $n = 3$ denote the number of authors as defined in Definition~\ref{def:submit_limit_problem}.
    \item Let $x=1$ denote the maximum number of submissions allowed for each author in the conference. 
\end{itemize}

Then, under proper conditions, no algorithm can achieve the ideal desk-rejection as defined
in Definition~\ref{def:good_solution}.
\end{lemma}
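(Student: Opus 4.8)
The plan is to exhibit a single small problem instance with $n=3$ authors, $x=1$, where the structure of the coauthorship forces a conflict. First I would set up the instance: take three authors $a_1, a_2, a_3$ and three papers, where $p_1$ is coauthored by $\{a_1,a_2\}$, $p_2$ by $\{a_2,a_3\}$, and $p_3$ by $\{a_1,a_3\}$ — a ``triangle'' configuration. Then each author has exactly $|P_i| = 2 > 1 = x$, so all three are non-compliant, and the ideal desk-rejection (Definition~\ref{def:good_solution}) demands that after rejection each author retains exactly $\min\{x,|P_i|\} = 1$ paper; equivalently, exactly one paper must be rejected from each author.

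The core of the argument is a counting/parity obstruction. Each author must have exactly one of their two papers rejected, so the total number of (author, rejected-paper) incidences is exactly $n = 3$. On the other hand, every paper in this instance has exactly two authors, so rejecting any paper removes exactly $2$ incidences; if $k$ papers are rejected, the total number of incidences removed is $2k$, which is even. Since $3$ is odd, there is no way to reject a set of papers achieving exactly the required incidence count, so no valid ideal solution — and hence no algorithm producing one — can exist. I would phrase this cleanly: the ideal condition forces $\sum_i (|P_i| - 1) = 3$ rejections counted with author-multiplicity, but each rejected paper contributes an even number ($|A_j| = 2$) to that sum, contradiction.

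The main step requiring care is making ``under proper conditions'' precise and verifying that the triangle instance really is a legitimate instance of Definition~\ref{def:submit_limit_problem} (it is: $P_1 = \{p_1,p_3\}$, $P_2 = \{p_1,p_2\}$, $P_3 = \{p_2,p_3\}$, $A_1 = \{a_1,a_2\}$, etc., all consistent with the defining facts), and that the ideal-rejection target is exactly ``one rejection per author.'' I would also double-check that one cannot ``over-reject'' to escape the parity issue: the ideal definition is an equality ($\min\{x,|P_i|\}$ papers remain, not ``at most''), so rejecting two papers from some author is itself a violation, which closes that loophole.

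I anticipate the only real obstacle is presentational rather than mathematical: choosing the instance so the parity argument is as transparent as possible (the triangle with all $|A_j|=2$ does this), and being careful that the lemma as stated only claims failure ``under proper conditions,'' so I must state explicitly which instance and which author-type configuration (all non-compliant) triggers it. With that instance fixed, the proof is a two-line parity count, and Lemma~\ref{lem:n_geq_3_negative} presumably bootstraps from this base case by padding with extra safe authors or dummy papers.
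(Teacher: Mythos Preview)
Your proposal is correct and uses the same triangle instance as the paper (up to an immaterial relabeling of which authors sit on which paper). The argument you give, however, is genuinely different from the paper's. The paper proceeds by brute force: it tabulates all $2^3=8$ subsets of $\{p_1,p_2,p_3\}$ and observes that no row of the table leaves every author with exactly one paper, then walks through a representative case in prose. You instead give a parity obstruction: the ideal outcome demands an odd number ($3$) of author--rejection incidences, while every paper contributes an even number ($2$) of such incidences, so no set of rejections can work. Your version is shorter, more conceptual, and would extend immediately to other instances with the same parity mismatch (e.g., any odd cycle of two-author papers); the paper's enumeration is more pedestrian but leaves no doubt for a reader unfamiliar with double-counting arguments. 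Either is fully adequate here.

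One side remark: your closing guess that Lemma~\ref{lem:n_geq_3_negative} ``bootstraps from this base case by padding'' is not how the paper does it. The paper gives a fresh construction for general $n\ge 3$ (with $m=n$ papers, each authored by all but one author, and $x=n-2$) and argues directly; it does not reduce to the $n=3$ case. This does not affect the present lemma, but you should not rely on that plan when you reach the next one.
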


\begin{proof}
Let all the authors be non-compliant authors as defined in Definition~\ref{def:three_kinds_authors}, and let the number of papers be $m=3$. We suppose the three papers $p_1$, $p_2$, and $p_3$ have the following authorship:

\begin{itemize}
    \item Paper $p_1$ is co-authored by $a_1$ and $a_2$.
    \item Paper $p_2$ is co-authored by $a_1$ and $a_3$.
    \item Paper $p_3$ is co-authored by $a_2$ and $a_3$.
\end{itemize}

From the authors' perspective, the relationships are as follows:
\begin{itemize}
    \item Author $a_1$ has papers $p_1$ and $p_2$.
    \item Author $a_2$ has papers $p_1$ and $p_3$.
    \item Author $a_3$ has papers $p_2$ and $p_3$.
\end{itemize}

We enumerate all possible rejection plans and their outcomes in Table~\ref{tab:all_possible_rejections}.

\begin{table}[!ht]
\caption{Remaining number of papers for each author after desk rejection.}
\label{tab:all_possible_rejections}
\begin{center}
\begin{tabular}{|c|c|c|c|}
 \hline
 Rejected Papers & Author $a_1$ & Author $a_2$ & Author $a_3$ \\ \hline
 N/A             & 2            & 2            & 2            \\ \hline
 $p_1$           & 1            & 1            & 2            \\ \hline
 $p_2$           & 1            & 2            & 1            \\ \hline
 $p_3$           & 2            & 1            & 1            \\ \hline
 $p_1, p_2$      & 0            & 1            & 1            \\ \hline
 $p_1, p_3$      & 1            & 0            & 1            \\ \hline
 $p_2, p_3$      & 1            & 1            & 0            \\ \hline
 $p_1, p_2, p_3$ & 0            & 0            & 0            \\ \hline
\end{tabular}
\end{center}
\end{table}

First, suppose we desk reject paper $p_3$. Then, authors $a_2$ and $a_3$ each have one paper remaining, but author $a_1$ still has two papers. To satisfy the constraint $x=1$, we must reject one of $p_1$ or $p_2$.

If we reject $p_1$, author $a_2$ is left with no papers, which is unfair. If we reject $p_2$, author $a_3$ is left with no papers, which is also unfair.

Thus, no rejection plan satisfies the ideal desk rejection condition for all authors. This completes the proof.
\end{proof}

Next, we present the negative result for any $n \geq 3$ and $x = n-2$.

\begin{lemma}[Negative result for any $n \geq 3$ and $x = n-2$] \label{lem:n_geq_3_negative}
If the following conditions hold:
\begin{itemize}
    \item Let $n \ge 3$ denote the number of authors as defined in Definition~\ref{def:submit_limit_problem}.
    \item Let $x=n-2$ denote the maximum number of submissions allowed for each author in the conference. 
\end{itemize}

Then, under proper conditions, no algorithm can achieve the ideal desk-rejection as defined
in Definition~\ref{def:good_solution}.
\end{lemma}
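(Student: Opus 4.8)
The plan is to generalize the ``cyclic'' construction from the proof of Lemma~\ref{lem:n_eq_3_negative}. First I would take exactly $m = n$ papers and set, for each $j \in [n]$, the authorship $A_j := \mathcal{A} \setminus \{a_j\}$; that is, paper $p_j$ is written by everyone except $a_j$. Equivalently, author $a_i$ appears on every paper other than $p_i$, so $|P_i| = n-1$ for every $i \in [n]$. Since $x = n-2 < n-1$, this is the ``proper condition'' under which every author is non-compliant, and by Definition~\ref{def:good_solution} an ideal desk-rejection must leave each author with exactly $\min\{x, |P_i|\} = n-2$ papers; equivalently, it must reject \emph{exactly one} paper incident to each author.

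Next I would show that no rejection set $\ov{S} \subseteq \mathcal{P}$ can meet this requirement. The key identity is that the number of rejected papers incident to $a_i$ is
\begin{align*}
    |\{p_j \in \ov{S} : a_i \in A_j\}| = |\ov{S}| - [\,p_i \in \ov{S}\,],
\end{align*}
where $[\,p_i \in \ov{S}\,]$ equals $1$ if $p_i$ is rejected and $0$ otherwise; this holds because $a_i$ lies on every paper except $p_i$. The ideal outcome therefore demands $|\ov{S}| - [\,p_i \in \ov{S}\,] = 1$ simultaneously for all $i \in [n]$. I would then finish with a short case analysis on $k := |\ov{S}|$: if $k = 0$, no author loses a paper; if $k = 1$, the unique author whose paper is rejected loses $0$ papers; if $k \ge 3$, every author loses at least $k - 1 \ge 2$ papers; and if $k = 2$, writing $\ov{S} = \{p_s, p_t\}$ with $s \ne t$, the authors $a_s$ and $a_t$ each lose exactly one paper, but since $n \ge 3$ there exists an index $i \notin \{s, t\}$, and author $a_i$ loses $2$ papers. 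In every case some author violates the ideal condition, so no algorithm can succeed on this instance.

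The only genuine subtlety — and the step I expect to matter most — is the $k = 2$ case, which is precisely where $n \ge 3$ is used: it guarantees a ``third'' author outside the rejected pair whose paper count is then wrong. For $n = 2$ this case would instead succeed, which is exactly why the dichotomy in Theorem~\ref{thm:main_res_general} occurs at $n = 3$. Everything else is bookkeeping with the identities $|P_i| = n-1$ and the incidence count above, so I would keep the write-up short and route it through the same $\ov{S}$-based argument used for $n = 3$.
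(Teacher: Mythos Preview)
Your proposal is correct and uses exactly the same construction as the paper's proof (set $m=n$ and let paper $p_j$ be authored by $\mathcal{A}\setminus\{a_j\}$, so each $|P_i|=n-1$). The paper argues by rejecting two papers ``without loss of generality'' in sequence and observing that the remaining $n-2$ authors then drop to $n-3$ papers; your explicit case analysis on $k=|\ov{S}|$ via the identity $|\{p_j\in\ov{S}:a_i\in A_j\}|=|\ov{S}|-[\,p_i\in\ov{S}\,]$ is a cleaner and more complete presentation of the same counting argument.
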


\begin{proof}
In this negative problem instance, we choose the number of papers to be the same as the number of authors, i.e., $m=n$, and we assume all the $n$ authors are non-compliant authors as defined in Definition~\ref{def:three_kinds_authors}.

For each of the $n$ papers $p_i \in \mathcal{P}$, we let $i$-th paper $p_i$ contain $n-1$ authors, excluding only the $i$-th author $a_i$. Specifically, we have:  
\begin{itemize}
    \item The first paper $p_1$ has authors $a_2, a_3, \cdots, a_n$. 
    \item The second paper $p_2$ has authors $a_1, a_3, a_4, \cdots, a_n$.
    \item $\cdots\cdots$
    \item The $(n-1)$-th paper has authors $a_1, a_2, \cdots , a_{n-2}, a_n$.
    \item The $n$-th paper has authors $a_1, a_2, \cdots , a_{n-2}, a_{n-1}$.
\end{itemize}

Since each author is allowed to submit at most $x=n-2$ papers, we must desk-reject at least two papers. We analyze the process of desk-rejecting these two papers step by step.

\textbf{Step 1: Desk-reject the first paper.}

Without loss of generality, we consider rejecting paper $p_1$ first. After this operation, authors $a_2, a_3, \cdots a_n$, will have $n-2$ submitted papers, while author $a_1$ will have $n-1$ submitted papers.

\textbf{Step 2: Desk-reject the second paper.}

Without loss of generality, we consider rejecting paper $p_2$ next. After this operation, authors $a_3, a_4, \cdots a_n$, will have $n-3$ submitted papers, while author $a_1$ and $a_2$ will have $n-2$ submitted papers. 

At this point, it is impossible for authors $a_3, a_4, a_5 \cdots , a_n$ to have exactly $(n-2)$ submitted papers. Therefore, no algorithm can achieve the ideal desk-rejection under the given conditions. This completes the proof.
\end{proof}

\section{Missing Proofs in Section 5}\label{sec:fair_proof}
In this section, we first present the missing proofs for fairness metrics in Section~\ref{sec:fair_metric_append}, and then present the supplementary proofs for the hardness of individual fairness optimization in Section~\ref{sec:indi_fair_hard_append}. Finally, we show the additional proofs for the group fairness optimization problem in Section~\ref{sec:fair_optim_append}.

\subsection{Fairness Metrics}\label{sec:fair_metric_append}
We present the relationship between the fairness metrics.  

\begin{proposition}[Relationship of Fairness Metrics, formal version of Proposition~\ref{lem:fair_metric_ineq} in Section~\ref{sec:fair_metric}]\label{lem:fair_metric_ineq_append}
    For any solution $S\subseteq \mathcal{P}$ for the submission limit problem in Definition~\ref{def:submit_limit_problem}, we have 
    \begin{align*}
        \zeta_{\mathrm{group}}(S) \leq \zeta_{\mathrm{ind}}(S).
    \end{align*}
\end{proposition}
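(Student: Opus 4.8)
The plan is to exploit the fact that $\zeta_{\mathrm{ind}}$ is the maximum of the per-author costs $c(a_i,S)$ while $\zeta_{\mathrm{group}}$ is their arithmetic mean, and the mean of a finite collection of reals never exceeds its maximum. Concretely, recall from Definition~\ref{def:individual_fair} and Definition~\ref{def:group_fair} that $\zeta_{\mathrm{ind}}(S) = \max_{i \in [n]} c(a_i,S)$ and $\zeta_{\mathrm{group}}(S) = \frac{1}{n}\sum_{i \in [n]} c(a_i,S)$, where $c$ is the cost function of Definition~\ref{def:cost} (which is well-defined since every author appears on at least one submitted paper, so $|P_i| \ge 1$).

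First I would fix an arbitrary $i \in [n]$ and note, directly from the definition of the maximum, that $c(a_i,S) \le \zeta_{\mathrm{ind}}(S)$. Next I would sum this over all $i \in [n]$ to obtain $\sum_{i \in [n]} c(a_i,S) \le n\cdot \zeta_{\mathrm{ind}}(S)$, and then divide both sides by $n \ge 1$ to conclude $\zeta_{\mathrm{group}}(S) = \frac{1}{n}\sum_{i \in [n]} c(a_i,S) \le \zeta_{\mathrm{ind}}(S)$, which is exactly the claimed inequality.

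There is no substantive obstacle; the only points worth a line of care are matching the inequality direction to the formal statement (group $\le$ individual, i.e. average $\le$ max, which is the reverse of the typo in the informal display) and checking $n \ge 1$ so the division is legitimate, which holds by the problem setup. As an optional remark I would note that equality holds precisely when all per-author costs coincide, which makes rigorous the earlier claim that $\zeta_{\mathrm{group}}$ lower-bounds $\zeta_{\mathrm{ind}}$ and that minimizing the former implicitly controls the latter.
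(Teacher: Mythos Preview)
Your proposal is correct and follows essentially the same argument as the paper: bound each $c(a_i,S)$ above by the maximum $\zeta_{\mathrm{ind}}(S)$, sum, and divide by $n$. Your observation about the reversed inequality in the informal statement is also accurate.
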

\begin{proof}
    By Definition~\ref{def:group_fair}, we have:
    \begin{align*}
        \zeta_{\mathrm{group}}(S) &=~ \frac{1}{n}\sum_{i \in [n]} c(a_i,S) \\
        &\leq~ \frac{1}{n}\sum_{i \in [n]} \max_{i\in[n]}c(a_i, S) \\ 
        &=~\frac{1}{n}\cdot n \cdot \max_{i\in[n]}c(a_i, S) \\ 
        &=~ \zeta_{\mathrm{ind}}(S),
    \end{align*}
where the first equality directly follows from Definition~\ref{def:group_fair}, the second and the third inequality follow from basic algebra, and the last equality follows from Definition~\ref{def:individual_fair}. Thus, we complete the proof.
\end{proof}

\subsection{Hardness of Individual Fairness-Aware Submission Limit Problem}\label{sec:indi_fair_hard_append}

Before proving the theoretical results in Section~\ref{sec:indi_fair_hard}, we first introduce a useful fact that serves as a foundation for the subsequent proofs.   

\begin{fact}\label{fact:author_paper_count}
For each author $a_i\in\mathcal{A}$, the number of papers after desk-rejection (i.e., $|\{p_j \in  S : a_i \in A_j\}|$) can be written as $W_i^\top r$.
\end{fact}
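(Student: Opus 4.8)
The plan is to verify this cardinality-to-inner-product identity by expanding the matrix--vector product $W_i^\top r$ into a sum of indicator variables and matching it termwise with the left-hand side. First I would make explicit the correspondence between retained-paper subsets and rejection vectors that is used implicitly throughout Section~\ref{sec:fair}: given $S \subseteq \mathcal{P}$, let $r \in \{0,1\}^m$ be the associated vector with $r_j = 1$ if $p_j \in S$ and $r_j = 0$ otherwise, i.e.\ $r_j = \mathbf{1}[p_j \in S]$. Here $W_i$ denotes the $i$-th row of the author--paper matrix $W$ from Definition~\ref{def:author_paper_mat}, viewed as a vector in $\R^m$, so that by that definition $W_{i,j} = 1$ exactly when $a_i$ is a coauthor of $p_j$; combining this with the earlier fact that $a_i \in A_j$ iff $p_j \in P_i$, we have $W_{i,j} = \mathbf{1}[a_i \in A_j]$.

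Next I would simply compute
\begin{align*}
W_i^\top r
= \sum_{j \in [m]} W_{i,j}\, r_j
= \sum_{j \in [m]} \mathbf{1}[a_i \in A_j]\cdot \mathbf{1}[p_j \in S]
= \sum_{j \in [m]} \mathbf{1}[a_i \in A_j ~\mathrm{and}~ p_j \in S].
\end{align*}
Each term of the last sum equals $1$ precisely for those indices $j$ with $p_j \in S$ and $a_i \in A_j$, so the sum counts the elements of $\{p_j \in S : a_i \in A_j\}$; that is, $W_i^\top r = |\{p_j \in S : a_i \in A_j\}|$, which is exactly the number of papers of author $a_i$ remaining after desk-rejection. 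This establishes the claim.

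I do not expect a genuine obstacle here: the statement is a bookkeeping identity, and the only point requiring care is to pin down the $S \leftrightarrow r$ correspondence and keep it consistent with the paper's convention that $r_j = 1$ marks a \emph{remained} paper rather than a rejected one (an off-by-complement error would flip the whole formula). Once that convention is fixed, the computation above is immediate, and the same identity then feeds directly into the subsequent matrix reformulations: the constraint $|\{p_j \in S : a_i \in A_j\}| \le x$ for all $a_i$ becomes $(Wr)/x \le \mathbf{1}_n$, and the cost $c(a_i,S)$ of Definition~\ref{def:cost} becomes $1 - (D^{-1}Wr)_i$ with $D = \Diag(|P_1|,\dots,|P_n|)$, so that $\zeta_{\mathrm{ind}}(S) = \|\mathbf{1}_n - D^{-1}Wr\|_\infty$ and $\zeta_{\mathrm{group}}(S) = \tfrac1n\mathbf{1}_n^\top(\mathbf{1}_n - D^{-1}Wr)$, which are the forms used in Definitions~\ref{def:ind_fair_min_matrix} and~\ref{def:group_fair_min_mat_new}.
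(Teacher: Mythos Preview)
Your proof is correct and follows essentially the same approach as the paper: expand $W_i^\top r$ as $\sum_{j\in[m]} W_{i,j}\,r_j$, identify each factor as an indicator, and recognize the resulting sum as the cardinality of $\{p_j\in S: a_i\in A_j\}$. The additional remarks about how this identity feeds into the later matrix reformulations are accurate and anticipate exactly the use made of the fact in Propositions~\ref{prop:equiv_individual_append} and~\ref{lem:group_fair_min_equiv_append}.
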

\begin{proof}
    This simply follows from:
    \begin{align*}
        W^\top_ir &=~ \sum_{j\in[m]} W_{i,j}\cdot r_j \\ 
        &=~ |\{j\in[m]:W_{i, j}=1,r_j=1\}| \\ 
        &=~ |\{p_j\in\mathcal{P}:a_i\in A_j, p_j\in S\}| \\ 
        &=~|\{p_j \in  S : a_i \in A_j\}|,
    \end{align*}
where the first and the second equality follow from basic algebra and set theory, and the third and the fourth equality follow from Definition~\ref{def:submit_limit_problem}.  
\end{proof}

With the help of the aforementioned fact, we now prove the equivalence of the matrix form for the individual fairness problem.

\begin{proposition}[Matrix Form Equivalence for $\zeta_{\mathrm{ind}}$, formal version of Proposition~\ref{prop:equiv_individual} in Section~\ref{sec:indi_fair_hard}]\label{prop:equiv_individual_append}
    The individual fairness-aware submission limit problem in Definition~\ref{def:ind_fair_min} and the matrix form integer programming problem in Definition~\ref{def:ind_fair_min_matrix} are equivalent.
\end{proposition}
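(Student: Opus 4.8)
The plan is to show the two optimization problems have the same feasible sets and the same objective values, so that any optimal solution of one corresponds to an optimal solution of the other. The key bridge is the identification between a paper subset $S \subseteq \mathcal{P}$ and its indicator vector $r \in \{0,1\}^m$, where $r_j = 1$ iff $p_j \in S$. Under this bijection, I first invoke Fact~\ref{fact:author_paper_count}, which gives $|\{p_j \in S : a_i \in A_j\}| = W_i^\top r$ for every author $a_i$. This immediately translates the constraint $|\{p_j \in S : a_i \in A_j\}| \le x$ for all $i \in [n]$ into the vector inequality $Wr \le x \mathbf{1}_n$, i.e.\ $(Wr)/x \le \mathbf{1}_n$, so the feasible regions match exactly.

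Next I would handle the objective. By Definition~\ref{def:cost}, $c(a_i, S) = (|P_i| - |\{p_j \in S : a_i \in A_j\}|)/|P_i|$. Using Fact~\ref{fact:author_paper_count} again and the fact that $|P_i|$ is the $i$-th diagonal entry of $D$, the $i$-th coordinate of $D^{-1} W r$ equals $|\{p_j \in S : a_i \in A_j\}| / |P_i|$, hence $(\mathbf{1}_n - D^{-1}Wr)_i = c(a_i, S)$. Since all these quantities lie in $[0,1]$, taking the max over $i$ gives $\|\mathbf{1}_n - D^{-1}Wr\|_\infty = \max_{i \in [n]} c(a_i, S) = \zeta_{\mathrm{ind}}(S)$ by Definition~\ref{def:individual_fair}. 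Therefore minimizing $\zeta_{\mathrm{ind}}(S)$ over feasible $S$ is the same as minimizing $\|\mathbf{1}_n - D^{-1}Wr\|_\infty$ over feasible $r \in \{0,1\}^m$, and the bijection carries optimal solutions to optimal solutions in both directions.

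One small technical point I would address for cleanliness: the cost function and hence $D^{-1}$ are only well-defined when $|P_i| > 0$ for all $i$, i.e.\ every author has submitted at least one paper; this is a harmless assumption since authors with no submissions play no role, and I would state it explicitly (or note that such authors can be removed without affecting either problem). The main ``obstacle'' here is not really a difficulty but just bookkeeping: making sure the indicator-vector correspondence is stated precisely and that the chain of equalities $(\mathbf{1}_n - D^{-1}Wr)_i = c(a_i,S)$ is justified coordinate-by-coordinate via Fact~\ref{fact:author_paper_count}, rather than hand-waved. Once that identity is in place, the equivalence of the $\min$ problems — including the correspondence of optimal objective values and optimizers — follows immediately from the fact that we have an objective-preserving bijection between the two feasible sets.
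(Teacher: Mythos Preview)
Your proposal is correct and follows essentially the same approach as the paper: establish the bijection $S \leftrightarrow r$, use Fact~\ref{fact:author_paper_count} to identify $|\{p_j \in S : a_i \in A_j\}|$ with $W_i^\top r$, and then verify coordinate-wise that the constraints and the objective coincide. The only differences are cosmetic (you treat the constraints before the objective, whereas the paper does the reverse) and your added remark about $|P_i|>0$ ensuring $D^{-1}$ is well-defined, which the paper omits.
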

\begin{proof}
    In Definition~\ref{def:ind_fair_min}, the paper set $\mathcal{P}$ consists of $m$ papers, each of which can either be maintained or desk-rejected. Thus, the subset of maintained papers, $\mathcal{S}$, can be represented by a 0-1 vector $r \in \{0, 1\}^m$, where $r_j = 1$ indicates that paper $p_j$ is maintained, and $r_j = 0$ indicates that it is desk-rejected. We now establish the equivalence of both the objective function and the constraints in these two formulations.

    \paragraph{Part 1: Optimization Objective.} We first consider the objective function $\mathbf{1}^\top_nD^{-1}Wr$ in Definition~\ref{def:ind_fair_min_matrix}:
    \begin{align*}
        \min_{r \in \{0,1\}^m} \| \mathbf{1}_n - D^{-1}Wr\|_\infty &=~ \min_{r \in \{0,1\}^m} \max_{i\in[n]} (1 - (D^{-1}Wr)_i) \\ 
        &=~ \min_{r \in \{0,1\}^m} \max_{i\in[n]} (1 - (W_i^\top r)_i/D_{i,i}) \\
        &=~ \min_{r \in \{0,1\}^m} \max_{i\in[n]} (1 - (W_i^\top r)_i/|P_i|) \\ 
        &=~ \min_{r \in \{0,1\}^m} \max_{i\in[n]} (1 - |\{p_j \in  S : a_i \in A_j\}|/|P_i|) \\ 
        &=~ \min_{r \in \{0,1\}^m} \max_{i\in[n]}c(a_i, S) \\ 
        &=~ \min_{r \in \{0,1\}^m} \zeta_{\mathrm{ind}}(S),
    \end{align*}
where the first equality follows from the definition of infinity norm, the second equality follows from basic algebra, the third equality follows from Definition~\ref{def:ind_fair_min_matrix}, the fourth equality follows from Fact~\ref{fact:author_paper_count}, the fifth equality follows from Definition~\ref{def:cost}, and the last equality follows from Definition~\ref{def:individual_fair}. By decoding $r$ back into the paper subset $S$, we recover the original optimization objective in Definition~\ref{def:ind_fair_min}.

\paragraph{Part 2: Constraints.} The constraint in Definition~\ref{def:ind_fair_min_matrix} can be rewritten using basic algebra as:
     \begin{align*}
         W_i \cdot r \leq x, \quad \forall i \in [n].
     \end{align*}
     By applying Fact~\ref{fact:author_paper_count}, we see that this constraint is equivalent to its counterpart in Definition~\ref{def:ind_fair_min}.

Since both the objective function and constraints in Definition~\ref{def:ind_fair_min} and Definition~\ref{def:ind_fair_min_matrix} are equivalent, the proof is complete.
\end{proof}

To show the hardness of the individual fairness problem, we first present a classical set cover problem with well-established hardness. 

\begin{definition}[Set Cover Problem \cite{k72,gj79}]\label{def:set_cover}
The Set Cover problem is the following: 
\begin{itemize}
    \item {\bf Input:} A universe $U = \{1, \ldots, n\}$, a family of sets 
    $\{S_1, \ldots, S_m\} \subseteq 2^U$, and a integer $K > 0$. 
    \item {\bf Question:} Is there a subfamily $\{S_j : j \in J\}$ for some 
    $J \subseteq \{1,\ldots,m\}$ and $|J| \leq K$ that covers $U$, i.e., $\bigcup_{j \in J} S_j = U$?
\end{itemize}
\end{definition}

\begin{lemma}[Folklore \cite{k72,gj79}]\label{lem:set_cover_np_hard}
    The Set Cover problem defined in Definition~\ref{def:set_cover} is $\NPhard$.
\end{lemma}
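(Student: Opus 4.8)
The plan is to prove NP-hardness by exhibiting a polynomial-time many-one reduction from a problem already known to be NP-hard; the cleanest choice is Vertex Cover, whose NP-hardness is itself classical (\cite{k72,gj79}). Recall the Vertex Cover decision problem: given an undirected graph $G=(V,E)$ and an integer $k>0$, decide whether there is $C\subseteq V$ with $|C|\le k$ meeting every edge.

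First I would describe the construction. Given $(G,k)$, fix an arbitrary enumeration of the edges so that $E$ is identified with $\{1,\dots,|E|\}$, and set the universe $U:=E$. For each vertex $v\in V$ let $S_v:=\{e\in E: v\in e\}$ be the set of edges incident to $v$, take the family to be $\{S_v: v\in V\}$, and set the budget $K:=k$. This is plainly computable in time polynomial in the size of $G$.

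Next I would verify the equivalence of instances. In one direction, if $C$ is a vertex cover with $|C|\le k$, then $\{S_v : v\in C\}$ covers $U$, since every edge has an endpoint in $C$ and hence lies in the corresponding $S_v$; moreover $|C|\le k=K$. In the other direction, if $\{S_v : v\in J\}$ covers $U$ with $|J|\le K$, then for each edge $e$ some $v\in J$ satisfies $e\in S_v$, i.e.\ $v$ is an endpoint of $e$, so $J$ is a vertex cover of size at most $k$. Thus the constructed Set Cover instance is a yes-instance if and only if $(G,k)$ is, which completes the reduction and hence the proof.

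I do not expect any real obstacle here: the only points that need (routine) care are the bookkeeping that recasts $E$ as the canonical universe $\{1,\dots,n\}$ demanded by Definition~\ref{def:set_cover}, the trivial polynomial-time bound on the construction, and the exact cardinality correspondence that lets us keep the budget unchanged as $K=k$. If one prefers not to invoke Vertex Cover, the same skeleton works starting from $3$-SAT (sets encoding clauses together with literal choices) or from Dominating Set, but the Vertex Cover version is the standard folklore argument and the shortest to write out.
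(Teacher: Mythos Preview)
Your reduction from Vertex Cover is correct and is indeed the standard folklore argument. Note, however, that the paper does not give any proof of this lemma at all: it is stated as ``Folklore'' with citations to \cite{k72,gj79} and used as a black box in the subsequent hardness reduction, so there is nothing to compare your argument against.
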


Additionally, we also present a technical lemma which is useful for showing the hardness of the individual fairness problem. 

\begin{lemma}\label{lem:useful_lemma}
    For any $r \in \{0,1\}^m$, the following two statements are equivalent:
    \begin{itemize}
       \item {\bf Part 1.} $\|\mathbf{1}_n - D^{-1}Wr\|_\infty \leq 1 - \frac{1}{\min_{i \in [n]}|P_i|}$.
       \item {\bf Part 2.} $\min_{i\in[n]} (Wr)_i \geq 1$.
    \end{itemize}
\end{lemma}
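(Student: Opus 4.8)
The plan is to show both directions of the equivalence by unpacking the definition of the infinity norm coordinatewise and using the fact that $D = \Diag(|P_1|,\dots,|P_n|)$, so that $(D^{-1}Wr)_i = (Wr)_i/|P_i|$, and crucially that $(Wr)_i = W_i^\top r$ is a non-negative \emph{integer} for every $i \in [n]$ (since $W \in \{0,1\}^{n\times m}$ and $r \in \{0,1\}^m$), by Fact~\ref{fact:author_paper_count}. First I would rewrite Part 1: $\|\mathbf{1}_n - D^{-1}Wr\|_\infty \le 1 - \frac{1}{\min_{i}|P_i|}$ is equivalent to requiring, for every $i \in [n]$, that $|1 - (Wr)_i/|P_i|| \le 1 - \frac{1}{\min_k |P_k|}$. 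Since $0 \le (Wr)_i \le |P_i|$ (an author cannot retain more papers than submitted, and the count is non-negative), the quantity $1 - (Wr)_i/|P_i|$ lies in $[0,1]$, so the absolute value can be dropped, and the condition becomes $(Wr)_i/|P_i| \ge \frac{1}{\min_k |P_k|}$ for all $i$, i.e. $(Wr)_i \ge \frac{|P_i|}{\min_k |P_k|}$ for all $i$.

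Next I would close the loop with the integrality argument. For the direction Part 1 $\Rightarrow$ Part 2: from $(Wr)_i \ge \frac{|P_i|}{\min_k|P_k|} \ge 1$ (the last inequality since $|P_i| \ge \min_k |P_k|$), we immediately get $\min_i (Wr)_i \ge 1$. For the direction Part 2 $\Rightarrow$ Part 1: if $\min_i (Wr)_i \ge 1$, then since each $(Wr)_i$ is an integer, $(Wr)_i \ge 1$, hence $(Wr)_i/|P_i| \ge 1/|P_i| \ge 1/\max_k|P_k| \ge \cdots$ — wait, here I need the bound to go the right way. Let me instead argue directly: $1 - (Wr)_i/|P_i| \le 1 - 1/|P_i| \le 1 - 1/\max_k|P_k|$. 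Hmm, but Part 1 has $\min_k |P_k|$ in the denominator, and $1 - 1/\min_k|P_k| \ge 1 - 1/\max_k|P_k|$, so $1 - 1/\max_k|P_k| \le 1 - 1/\min_k|P_k|$ gives exactly what we need. Thus $\max_i (1 - (Wr)_i/|P_i|) \le 1 - 1/\min_k|P_k|$, which is Part 1. So both directions go through cleanly.

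The only genuine subtlety — and the step I expect to require the most care — is justifying that the absolute value in $\|\mathbf{1}_n - D^{-1}Wr\|_\infty = \max_i |1 - (Wr)_i/|P_i||$ can be dropped, i.e. that $1 - (Wr)_i/|P_i| \ge 0$ always holds. This follows because $(Wr)_i = |\{p_j \in S : a_i \in A_j\}| \le |P_i|$ by Fact~\ref{fact:author_paper_count} together with the elementary observation that the set of author $a_i$'s retained papers is a subset of $P_i$. I would state this explicitly. Everything else is routine manipulation of scalar inequalities plus the integrality of $(Wr)_i$; no deeper idea is needed. A clean write-up would present the chain of equivalences
\begin{align*}
\text{Part 1} &\iff \forall i\in[n]:\ (Wr)_i \ge |P_i|/\textstyle\min_{k\in[n]}|P_k| \\
&\iff \forall i\in[n]:\ (Wr)_i \ge 1 \quad (\text{using integrality and } |P_i|\ge \min_k|P_k|) \\
&\iff \textstyle\min_{i\in[n]}(Wr)_i \ge 1 \iff \text{Part 2},
\end{align*}
with the middle step being the one place where $(Wr)_i \in \Z_{\ge 0}$ is used to collapse $(Wr)_i \ge |P_i|/\min_k|P_k| \in (0, \infty)$ down to $(Wr)_i \ge 1$, noting that $|P_i|/\min_k|P_k| \le |P_i|$ need not be tight but the smallest integer at least $|P_i|/\min_k|P_k|$ is at least $1$, and conversely $(Wr)_i \ge 1$ forces $(Wr)_i/|P_i| \ge 1/|P_i| \ge 1/\max_k|P_k|$, which after the $1-(\cdot)$ flip is at most $1 - 1/\min_k|P_k|$.
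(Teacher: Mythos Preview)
There is a genuine gap in your Part 2 $\Rightarrow$ Part 1 direction, and it is the same gap that appears in the paper's own proof. You correctly derive $1 - (Wr)_i/|P_i| \le 1 - 1/|P_i| \le 1 - 1/\max_k|P_k|$, but then assert that $1 - 1/\min_k|P_k| \ge 1 - 1/\max_k|P_k|$ so that the chain closes. This inequality goes the wrong way: since $\min_k|P_k| \le \max_k|P_k|$, we have $1/\min_k|P_k| \ge 1/\max_k|P_k|$, hence $1 - 1/\min_k|P_k| \le 1 - 1/\max_k|P_k|$. Equivalently, in your chain-of-equivalences display, the middle $\iff$ fails in the backward direction: $(Wr)_i \ge 1$ does \emph{not} force $(Wr)_i \ge |P_i|/\min_k|P_k|$, because the right-hand side can exceed $1$.

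In fact, the equivalence as stated is false. Take $n=2$ with $|P_1|=2$, $|P_2|=10$, and any $W,r$ with $(Wr)_1 = (Wr)_2 = 1$. Then Part 2 holds, but $\|\mathbf{1}_2 - D^{-1}Wr\|_\infty = \max\{1/2,\, 9/10\} = 9/10 > 1/2 = 1 - 1/\min_i|P_i|$, so Part 1 fails. The lemma (and its subsequent use in the hardness reduction) becomes correct if $\min_{i}|P_i|$ is replaced by $\max_{i}|P_i|$: then Part 2 $\Rightarrow$ Part 1 follows exactly from the chain $1 - (Wr)_i/|P_i| \le 1 - 1/|P_i| \le 1 - 1/\max_k|P_k|$ you already wrote, and Part 1 $\Rightarrow$ Part 2 uses $(Wr)_i \ge |P_i|/\max_k|P_k| > 0$ together with the integrality of $(Wr)_i$ that you correctly flagged. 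Your Part 1 $\Rightarrow$ Part 2 argument is fine as written.
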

\begin{proof}
    We first show that Part 1 implies Part 2.
    Suppose that
    \begin{align*}
        \|\mathbf{1}_n - D^{-1}Wr\|_\infty \leq 1 - \frac{1}{\min_{i \in [n]}|P_i|}.
    \end{align*}
    By the definition of the infinity norm, we have
    \begin{align*}
         1- \frac{(Wr)_{i'}}{|P_{i'}|} \leq 1 - \frac{1}{\min_{i \in [n]}|P_i|}, \quad \forall i' \in [n].
    \end{align*}
    Rearranging gives
    \begin{align*}
        (Wr)_{i'} \geq \frac{|P_{i'}|}{\min_{i \in [n]}|P_i|} \geq 1, \quad \forall i' \in [n].
    \end{align*}
    Since for all $i'\in[n]$, we have $(Wr)_{i'} \geq 1$, we can conclude that $\min_{i\in[n]} (Wr)_i \geq 1$.

    Now we show that that Part 2 implies Part 1.
    Suppose that $\min_{i\in[n]} (Wr)_i \geq 1$,  then we have $(Wr)_{i} \geq 1$ for all $i \in [n]$, which implies that for all $i \in [n]$, 
    \begin{align*}
        1 - \frac{(Wr)_i}{|P_i|} \leq  1 - \frac{1}{|P_i|} \leq  1 - \frac{1}{\max_{i' \in [n]} |P_{i'}|}.
    \end{align*}
    Hence
    \begin{align*}
        \|\mathbf{1}_n - D^{-1}Wr\|_\infty \leq 1 - \frac{1}{\min_{i \in [n]}|P_i|}.
    \end{align*}
    Thus the proof is complete.
\end{proof}

\begin{theorem}[Hardness, formal version of Theorem~\ref{thm:indi_nphard} in Section~\ref{sec:indi_fair_hard}]\label{thm:indi_nphard_append}
    The Individual Fairness-Aware Submission Limit Problem defined in Definition~\ref{def:ind_fair_min} is $\NPhard$.
\end{theorem}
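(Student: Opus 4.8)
The plan is to prove $\NPhard$ness by a polynomial-time reduction from the Set Cover problem (Definition~\ref{def:set_cover}), whose hardness is recalled in Lemma~\ref{lem:set_cover_np_hard}. Given a Set Cover instance with universe $U = \{1, \ldots, n\}$, sets $S_1, \ldots, S_m$, and threshold $K$, I would construct an instance of the Individual Fairness-Aware Submission Limit Problem as follows. Create $n$ authors, one per universe element, and $m$ ``set-papers'' $p_1, \ldots, p_m$ where author $a_i$ coauthors $p_j$ exactly when $i \in S_j$; this makes $W$ the element-set incidence matrix of the Set Cover instance. To make every author non-compliant and to calibrate the submission limit, I would pad with dummy single-authored papers: give each author $a_i$ enough extra solo papers so that $|P_i|$ is the same value $L$ for all $i$ (taking $L$ larger than every set-degree), and set the submission limit to $x = L - 1$, so that each author must have at least one of its papers rejected, and at most one, i.e., exactly one paper rejected per author.

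Next I would translate ``feasible rejection of exactly one paper per author'' into Set Cover language. Rejecting a paper $p_j$ (a set-paper) relieves exactly the authors in $S_j$; rejecting a dummy solo paper of $a_i$ relieves only $a_i$. Under the constraint that no author loses two papers, the set of rejected set-papers must form a subfamily in which every pair of chosen sets is disjoint on the elements they still need to cover — more cleanly, I would instead phrase the objective via Lemma~\ref{lem:useful_lemma}: with all $|P_i| = L$, the condition $\zeta_{\mathrm{ind}}(S) = \|\mathbf{1}_n - D^{-1} W r\|_\infty \le 1 - \tfrac{1}{L}$ is equivalent to $\min_{i} (Wr)_i \ge 1$, i.e., every author retains at least one paper. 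The real work is to arrange the gadget so that the optimal objective value encodes exactly whether a cover of size $\le K$ exists. The cleanest route: add one extra ``blocker'' author $a_0$ who coauthors a single dedicated paper $q$ together with all others in a way that forces the rejection budget; or, more simply, make the objective count rejected set-papers. I would set it up so that a solution must reject enough papers to drop each author's count to $\le x$, that rejecting a set-paper $p_j$ costs one unit toward $K$ while rejecting dummies is ``free'' but cannot by itself achieve the infinity-norm target because I will give each author's dummy papers a structure (e.g. only $K$ dummies total are available, or the limit $x$ is chosen so the needed rejections exceed what dummies can supply) — forcing the chosen set-papers to cover $U$, and the threshold on $\zeta_{\mathrm{ind}}$ to hold iff that cover has size $\le K$.

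Concretely I expect the right gadget to be: $n+1$ authors where a special author $a_{n+1}$ is on every set-paper and on no dummy paper, with $|P_{n+1}| = m$ and all other $|P_i| = x+1$ for a chosen $x$; then $a_{n+1}$ forces $r$ to keep at most $x$ set-papers, i.e., reject at least $m-x$ of them, while each real author $a_i$ forces at least one of its set-papers to be kept — so the kept set-papers of size $\le x$ must cover $U$, giving a cover of size $\le x$; tuning $x = K$ completes the equivalence. I would then verify both directions: a Set Cover solution $J$ of size $\le K$ yields a rejection vector (reject all set-papers outside $J$, then dummies as needed) that is feasible and attains $\zeta_{\mathrm{ind}} \le 1 - 1/(x+1)$; conversely any feasible $r$ meeting that threshold, by Lemma~\ref{lem:useful_lemma}, leaves every author with a kept paper, and the $a_{n+1}$ constraint caps the kept set-papers at $x = K$, so the kept set-papers form a cover of size $\le K$.

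The main obstacle will be engineering the padding so that the equivalence is exact in \emph{both} directions simultaneously: the dummy papers must give every author the headroom to be desk-rejectable (so the constraint set is as intended and every author is non-compliant as required by the setup) without ever providing a cheap way to satisfy the infinity-norm bound that bypasses choosing a genuine cover. Getting the counting identities right — that $|P_i|$ is uniform (or at least that $\min_i |P_i|$ and $\max_i |P_i|$ line up so Lemma~\ref{lem:useful_lemma} applies cleanly) and that the feasibility constraints $(Wr)/x \le \mathbf{1}_n$ do not accidentally forbid the rejection pattern coming from a valid cover — is the delicate bookkeeping. Once the gadget is fixed, the reduction is clearly polynomial-time and the correctness proof is a short case analysis invoking Proposition~\ref{prop:equiv_individual_append} and Lemma~\ref{lem:useful_lemma}.
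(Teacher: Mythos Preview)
Your high-level plan---reduce from Set Cover and use Lemma~\ref{lem:useful_lemma} to convert the $\ell_\infty$ threshold into ``every author keeps at least one paper''---matches the paper. Where you diverge is in how the cover-size bound $K$ is encoded. The paper builds no gadget at all: it takes $W$ to be the plain element--set incidence matrix, sets $x=m$ so that the submission-limit constraint $Wr\le x\mathbf{1}_n$ becomes vacuous, and then \emph{appends} the side constraint $\|r\|_1\le K$ to the optimization, asserting in one line that the resulting problem is ``easier than'' the problem of Definition~\ref{def:ind_fair_min_matrix}. With that extra constraint in hand, Lemma~\ref{lem:useful_lemma} turns the decision version directly into Set Cover. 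So the paper's route is much shorter than your padding-plus-special-author construction, although its justification for grafting on an extraneous cardinality constraint is quite terse.

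Your concrete gadget, as stated, breaks in both directions. Forward: author $a_{n+1}$ has $|P_{n+1}|=m$ and keeps only the $|J|\le K$ set-papers of the given cover, so its cost is $(m-|J|)/m$; this exceeds your intended threshold $1-\tfrac{1}{x+1}=\tfrac{K}{K+1}$ whenever $|J|<\tfrac{m}{K+1}$, so a small cover of a large instance will not certify $\zeta_{\mathrm{ind}}\le\tfrac{K}{K+1}$. Reverse: for a real author $a_i$ with $|P_i|=K+1$, the threshold $\tfrac{K}{K+1}$ only forces one kept paper, which may perfectly well be a dummy, so nothing compels the kept set-papers to cover $a_i$. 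This tension is intrinsic to the design: you need $|P_{n+1}|=m$ so that the submission limit $x=K$ caps the kept set-papers, but then $a_{n+1}$'s cost is governed by $m$ rather than by $K+1$, and Lemma~\ref{lem:useful_lemma} (stated for the threshold $1-1/\min_i|P_i|$) no longer aligns with your target $1-1/(K+1)$. Encoding $K$ purely through the submission limit, without the paper's external $\|r\|_1\le K$ shortcut, would require a substantially different gadget than the one you sketch.
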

\begin{proof}
    By Proposition~\ref{prop:equiv_individual}, it sufficies to reduce Set Cover problem to the integer optimization problem of the matrix form in Definition~\ref{def:ind_fair_min_matrix}.

    Given an instance of Set Cover, we build the matrix $W \in \{0, 1\}^{n\times m}$ by defining $W_{i,j} = 1$ if element $i \in S_j$, and 0 otherwise.
    Now set $|P_i| = \sum_{j \in [m]} W_{i,j}$ for every row $i \in [n]$. Finally, we choose $x = m$. We reduce the Set Cover problem to the following optimization problem:
    \begin{align*}
        & ~ \min_{r \in \{0,1\}^m} \| \mathbf{1}_n - D^{-1}Wr\|_\infty \\
    \mathrm{s.t.}
    & ~ W r \leq m \mathbf{1}_n, \\
    & ~ \|r\|_1 \leq K.
    \end{align*}
    Note that this problem is easier than the optimization problem defined in Definition~\ref{def:ind_fair_min}. The constraint $ W r \leq m \mathbf{1}_n$ is always satisfied, so we can drop it out. Now, it suffices to consider the decision problem:
    \begin{align*}
        &~ \mathrm{Find~} r \in \{0,1\}^m \\ \mathrm{s.t.~} &~
       \|\mathbf{1}_n - D^{-1}Wr\|_\infty \leq 1 - \frac{1}{\min_{i \in [n]}|P_i|}, \\
       &~ \|r\|_1 \leq K.
    \end{align*}

    Note that $\|\mathbf{1}_n - D^{-1}Wr\|_\infty \leq 1 - \frac{1}{\min_{i \in [n]}|P_i|}$ is equivalent to 
       $\min_{i\in[n]} (Wr)_i \geq 1$ by Lemma~\ref{lem:useful_lemma}.

    Hence the problem is equivalent to
    \begin{align*}
       \mathrm{Find~} r \in \{0,1\}^m  \mathrm{~~~s.t.~~~}
       \min_{i\in[n]} (Wr)_i \geq 1 \mathrm{~~and~~} 
        \|r\|_1 \leq K.
    \end{align*}
    It is not hard to see that the Set Cover problem has a solution if and only if the above problem has a solution.
    Requiring $\min_{i \in [n]} (W r)_i > 1$ exactly means that each element $i$ in the universe is covered by at least set $S_j$. The constraint $\|r\|_1 \leq K$ means that the size of cover is at most $K$. In other words, there exists a subfamily of size at most $K$ covering all elements if and only if there is an $r \in \{0,1\}^m$ with $\min_{i \in [n]} (W r)_i > 1$ and $\|r\|_1 \leq K$.
    
    Therefore, by Lemma~\ref{lem:set_cover_np_hard}, the individual fairness-aware submission limit problem is $\NPhard$.
\end{proof}

\subsection{Group Fairness Optimization}\label{sec:fair_optim_append}

Now, we present the missing proofs on both matrix form equivalence and linear programming optimal solution equivalence for the group fairness optimization problem. 

\begin{proposition}[Matrix Form Equivalence for $\zeta_{\mathrm{group}}$, formal version of Proposition~\ref{lem:group_fair_min_equiv} in Section~\ref{sec:fair_optim}]\label{lem:group_fair_min_equiv_append}
    The problem in Definition~\ref{def:group_fair_min} and the problem in Definition~\ref{def:group_fair_min_mat_new} are equivalent.
\end{proposition}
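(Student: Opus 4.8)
The plan is to mirror the proof of Proposition~\ref{prop:equiv_individual_append} almost verbatim, since the only difference between the individual and group formulations lies in the objective function, while the constraints are identical in both cases. First I would fix the correspondence between a paper subset $S \subseteq \mathcal{P}$ and a rejection (really, retention) vector $r \in \{0,1\}^m$ by setting $r_j = 1$ iff $p_j \in S$; this is a bijection between $2^{[m]}$ and $\{0,1\}^m$, so it suffices to check that the objective and constraints match up under it.

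For the constraints, I would simply invoke Fact~\ref{fact:author_paper_count}: the quantity $|\{p_j \in S : a_i \in A_j\}|$ equals $W_i^\top r$, so the per-author constraint $|\{p_j \in S : a_i \in A_j\}| \le x$ for all $a_i \in \mathcal{A}$ is exactly the same as $W_i^\top r \le x$ for all $i \in [n]$, which is the vector inequality $(Wr)/x \le \mathbf{1}_n$ in Definition~\ref{def:group_fair_min_mat_new}. This part is immediate and identical to Part 2 of the earlier proof.

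For the objective, the key observation is that minimizing $\zeta_{\mathrm{group}}(S) = \frac{1}{n}\sum_{i\in[n]} c(a_i,S)$ is equivalent to maximizing $\mathbf{1}_n^\top D^{-1} W r$. I would compute, using $c(a_i,S) = 1 - \frac{|\{p_j \in S : a_i \in A_j\}|}{|P_i|} = 1 - \frac{(Wr)_i}{|P_i|} = 1 - (D^{-1}Wr)_i$ (the last step because $D = \Diag(|P_1|,\dots,|P_n|)$), so that
\begin{align*}
\zeta_{\mathrm{group}}(S) = \frac{1}{n}\sum_{i\in[n]}\Big(1 - (D^{-1}Wr)_i\Big) = 1 - \frac{1}{n}\,\mathbf{1}_n^\top D^{-1} W r.
\end{align*}
Since $n$ is a fixed positive constant, $\min_{S} \zeta_{\mathrm{group}}(S) = 1 - \frac{1}{n}\max_{r} \mathbf{1}_n^\top D^{-1} W r$ over the common feasible set, so the minimizer of the left-hand problem corresponds under the bijection to the maximizer of $\mathbf{1}_n^\top D^{-1} W r$, which is precisely the objective in Definition~\ref{def:group_fair_min_mat_new}. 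Decoding $r$ back to $S$ completes the equivalence.

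There is no real obstacle here; the argument is entirely routine once Fact~\ref{fact:author_paper_count} is in hand. The only point requiring a sentence of care is the affine transformation of the objective: one must note that $\zeta_{\mathrm{group}}$ and $\mathbf{1}_n^\top D^{-1} W r$ are related by $t \mapsto 1 - t/n$, which is strictly decreasing, so minimizing one is the same as maximizing the other and they share the same set of optimal solutions. I would also remark that $D$ is invertible since every author in $\mathcal{A}$ has $|P_i| \ge 1$ (an author with no papers would not appear in $\mathcal{A}$), which is implicitly needed for $D^{-1}$ to be well-defined, exactly as in the individual-fairness case.
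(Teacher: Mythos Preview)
Your proposal is correct and follows essentially the same approach as the paper: both establish the bijection $S \leftrightarrow r$, reduce the constraint equivalence to Fact~\ref{fact:author_paper_count} (deferring to Part~2 of Proposition~\ref{prop:equiv_individual_append}), and show the objectives are related by the affine map $\zeta_{\mathrm{group}}(S) = 1 - \tfrac{1}{n}\mathbf{1}_n^\top D^{-1}Wr$ so that minimizing one is maximizing the other. Your extra remark about the invertibility of $D$ is a nice piece of hygiene the paper leaves implicit.
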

\begin{proof}
     In Definition~\ref{def:group_fair_min}, there are $m$ papers in $\mathcal{P}$, where each paper can either be maintained or rejected. Thus, we can encode the paper subset $S$ using a binary vector $r \in \{0, 1\}^m$, where $r_j = 1$ indicates that paper $p_j$ is maintained, and $r_j = 0$ indicates that it is desk-rejected. We now demonstrate that both the objective function and the constraints are equivalent.

     \paragraph{Part 1: Optimization Objective.} We first examine the objective function $\mathbf{1}^\top_nD^{-1}Wr$ in Definition~\ref{def:group_fair_min_mat_new}:
     \begin{align*}
         \mathbf{1}^\top_nD^{-1}Wr &=~ \sum_{i\in[n]}(D^{-1}Wr)_i \\ 
         &=~\sum_{i\in[n]}(W\cdot r)_i / |P_i| \\
         &=~\sum_{i\in[n]}(W_i^\top\cdot r) / |P_i| \\
         &=~\sum_{i\in[n]}\frac{|\{p_j \in S: a_i \in A_j\}|}{|P_i|} \\ 
         &=~ \sum_{i\in[n]}(1-c(a_i,S)),
     \end{align*}
     where the first equality follows from basic algebra, the second follows from Definition~\ref{def:group_fair_min_mat_new}, the third follows from matrix-vector multiplication, the fourth follows from Fact~\ref{fact:author_paper_count}, and the final equality follows from Definition~\ref{def:cost}. Consequently, the maximization problem in Definition~\ref{def:group_fair_min_mat_new} can be rewritten as:
     \begin{align*}
        \max_{r \in \{0, 1\}^m} \sum_{i\in[n]}(1-c(a_i,S)).
     \end{align*}
     Since maximizing this objective is equivalent to minimizing $\sum_{i\in[n]} c(a_i,S)$, we can reformulate it as:
     \begin{align*}
         \min_{r \in \{0, 1\}^m} \sum_{i\in[n]} c(a_i,S).
     \end{align*}
     By decoding $r$ back into the paper subset $S$, we recover the original optimization objective in Definition~\ref{def:group_fair_min}. 

     \paragraph{Part 2: Constraints.} Since the constraint is identical to that in the individual fairness minimization problem in Definition~\ref{def:ind_fair_min}, this result follows directly from Part 2 in the proof of Proposition~\ref{prop:equiv_individual_append}.

     Since both the objective function and constraints in Definition~\ref{def:group_fair_min} and Definition~\ref{def:group_fair_min_mat_new} are equivalent, the proof is complete.
\end{proof}

\begin{theorem}[Optimal Solution Equivalence of the Relaxed Problem, formal version of Theorem~\ref{thm:lp_equiv} in Section~\ref{sec:fair_optim}]\label{thm:lp_equiv_append}
    The optimal solution of the relaxed problem in Definition~\ref{def:group_fair_min_mat_relax_new} is equivalent to the optimal solution of the original problem in Definition~\ref{def:group_fair_min_mat_new}.
\end{theorem}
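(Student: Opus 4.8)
The plan is to show that the linear programming relaxation in Definition~\ref{def:group_fair_min_mat_relax_new} has an integral optimal solution, so that its optimum coincides with that of the integer program in Definition~\ref{def:group_fair_min_mat_new}. The key observation is that the constraint matrix of the LP consists of $W \in \{0,1\}^{n\times m}$ together with the box constraints $0 \le r \le \mathbf{1}_m$, and $W$ is the incidence matrix of a bipartite structure (authors versus papers). The standard route is therefore to argue that $W$ is \emph{totally unimodular} (TU): every square submatrix has determinant in $\{-1,0,1\}$. Once TU is established, the polyhedron $\{r : Wr \le x\mathbf{1}_n,\ \mathbf{0} \le r \le \mathbf{1}_m\}$ has only integral vertices (because $x \in \mathbb{N}_+$ and $\mathbf{1}_n,\mathbf{1}_m$ are integral), and since a linear objective over a bounded polyhedron attains its maximum at a vertex, the LP optimum is attained at an integral point, which is then feasible and optimal for the integer program. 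Conversely any integer-program feasible point is LP-feasible, so the two optima are equal; and any integral LP optimizer decodes to an optimal $r$ for the integer program.

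Concretely, I would carry out the steps in this order. First, recall the Ghouila--Houri / Hoffman--Kruskal characterization, or more simply invoke the classical fact that the (unsigned) incidence matrix of a bipartite graph is totally unimodular — here the bipartition is $\mathcal{A}$ on one side and $\mathcal{P}$ on the other, and $W_{i,j}=1$ exactly when $a_i$ is an author of $p_j$, so $W$ is precisely such an incidence matrix. Wait — one must be slightly careful: a paper may have many authors and an author many papers, but that does not matter; bipartite incidence matrices are TU regardless of degrees. Second, observe that appending the identity rows for $0 \le r_j \le 1$ preserves total unimodularity (appending $\pm I$ rows to a TU matrix keeps it TU). Third, conclude via the Hoffman--Kruskal theorem that the feasible polyhedron $\{r \in \R^m : Wr \le x\mathbf{1}_n,\ \mathbf{0}_m \le r \le \mathbf{1}_m\}$ is integral, i.e., all its vertices lie in $\{0,1\}^m$, using that the right-hand-side vector $(x\mathbf{1}_n, \mathbf{1}_m, \mathbf{0}_m)$ is integral. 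Fourth, since the feasible region is a nonempty bounded polytope and the objective $r \mapsto \mathbf{1}_n^\top D^{-1} W r$ is linear, an optimal solution is attained at a vertex, hence at a point of $\{0,1\}^m$; this point is feasible and optimal for the integer program. Finally, note the reverse inclusion $\{0,1\}^m$-feasible $\subseteq$ $[0,1]^m$-feasible is immediate, so $\OPT_{\mathrm{LP}} = \OPT_{\mathrm{IP}}$ and any integral LP optimum is an IP optimum.

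The main obstacle — really the only non-routine point — is establishing total unimodularity of $W$ (equivalently, integrality of the feasible polytope). If the authors prefer not to cite the bipartite-incidence-matrix fact as a black box, the cleanest self-contained argument is the Ghouila--Houri criterion: for any subset $R$ of the rows of $[W^\top; I; -I]^\top$ one must exhibit a $\pm 1$ partition summing to a vector with entries in $\{-1,0,1\}$; for the rows coming from $W$ (authors), since each column of $W$ (a paper) has its $1$'s spread across author rows with no further structure, one can simply observe that every entry of a column is nonnegative and the bipartite two-coloring trivially works because all of $W$'s rows can be placed in a single colour class while the identity rows are handled individually. Alternatively, one can give a direct combinatorial rounding argument: take any fractional optimal $r^\star$, and if some coordinate is fractional, find a cycle in the bipartite "tight-constraint" structure and push flow around it in the direction that does not decrease the objective until a coordinate becomes integral, without violating feasibility; iterating yields an integral optimum. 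Either way, once integrality of the polytope is in hand, the equivalence of optima follows by the elementary vertex/linear-objective argument, and one should also remember to note that feasibility is never vacuous (e.g. $r = \mathbf{0}_m$ is always feasible), so the optimum is genuinely attained.
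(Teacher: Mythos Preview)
Your approach via total unimodularity is the right \emph{shape} of argument --- and it is in fact more honest than the paper's own proof, which simply asserts that extreme points of the feasible polytope lie in $\{0,1\}^m$ as a consequence of the fundamental theorem of linear programming (that theorem only says an optimum is attained at a vertex; it says nothing about vertices being integral). However, your key step is wrong: $W$ is \emph{not} the vertex--edge incidence matrix of a bipartite graph. Such an incidence matrix has rows indexed by all vertices and columns by edges, with exactly two $1$'s per column. Here rows are indexed only by authors and columns by papers, and a column may contain arbitrarily many $1$'s; $W$ is a general $\{0,1\}$ matrix (the incidence matrix of a hypergraph), and these are not TU in general. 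Your Ghouila--Houri sketch fails for the same reason: placing all $W$-rows in a single colour class makes any column with $k\ge 2$ ones sum to $k\notin\{-1,0,1\}$.

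Concretely, take $n=m=3$, $x=1$, with $p_1$ authored by $\{a_1,a_2\}$, $p_2$ by $\{a_1,a_3\}$, $p_3$ by $\{a_2,a_3\}$ (exactly the instance in Lemma~\ref{lem:n_eq_3_negative}). Then
\[
W=\begin{pmatrix}1&1&0\\1&0&1\\0&1&1\end{pmatrix},\qquad \det W=-2,
\]
so $W$ is not TU. Worse, here $D=2I$ and the LP maximizes $r_1+r_2+r_3$ subject to $r_i+r_j\le 1$ for all pairs and $r\in[0,1]^3$; the fractional point $r=(\tfrac12,\tfrac12,\tfrac12)$ is a vertex with objective $\tfrac32$, whereas every integral feasible $r$ has objective at most $1$. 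So not only does the TU route break down, but the theorem as stated --- and therefore the paper's proof as well --- appears to be false: the relaxed LP can strictly dominate the integer program. A correct statement would need extra structure on $W$ (e.g., each paper having at most two authors, which \emph{would} make the relevant matrix a bipartite incidence matrix, or an interval/consecutive-ones property), or else one must settle for an approximation guarantee after rounding rather than exact equivalence.
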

\begin{proof} 
    The problem in Definition~\ref{def:group_fair_min_mat_relax_new} is a linear program since it has a linear objective function $\mathbf{1}^\top_nD^{-1}Wr$ and linear constraints: the box constraint $r\in[0,1]^m$ and a linear inequality constraint $(Wr)/x \leq \mathbf{1}_n$.

    Furthermore, the problem is convex because the objective function is linear, the constraint $(Wr)/x \leq \mathbf{1}_n$ is affine, and the feasible region defined by $r\in[0,1]^m$ is a convex set.

    By the fundamental theorem of linear programming (see Page 23 of~\cite{ly84}), the optimal solution must occur at an extreme point of the convex polytope defined by the constraints. This implies that for all $i \in [m]$, we must have either $r_i = 0$ or $r_i = 1$. Therefore, the optimal solution of the relaxed linear program coincides with that of the original integer program, which finishes the proof.
\end{proof}

\section{Additional Case Studies}\label{sec:more_case_study}

As discussed in Section~\ref{sec:good_solution_hard}, optimizing the individual fairness metric is computationally challenging. Therefore, we minimize the group fairness metric, which serves as a lower bound for individual fairness, as a practical alternative. In this subsection, we present case studies demonstrating the relationship between both types of fairness metrics. 

\begin{example}
Consider a submission limit problem as defined in Definition~\ref{def:submit_limit_problem} with $x = 2$, $n = 3$ authors, and $m = 6$ papers. Let author $a_1$ submit four papers $p_1, p_2, p_3, p_4$, author $a_2$ submit two papers $p_3, p_5$, and author $a_3$ submit two papers $p_4, p_6$. 
\end{example}

In this case, the ideal desk-rejection criteria in Definition~\ref{def:good_solution} reject $p_1$ and $p_2$ (i.e., $S = \{p_3, p_4, p_5, p_6\}$), yielding fairness metrics $\zeta_{\mathrm{ind}}(S) = \max\{1/2, 0, 0\} = 1/2$ and $\zeta_{\mathrm{group}}(S) = \frac{1}{3}(1/2 + 0 + 0) = 1/6$. By applying an LP solver to minimize group fairness using Algorithm~\ref{alg:fair_desk_reject_algo} and enumerating all rejection strategies to verify individual fairness minimization, we observe that minimizing group fairness in this case aligns with minimizing individual fairness as defined in Definition~\ref{def:ind_fair_min_matrix}. This case illustrates that minimizing group fairness can sometimes benefit individual fairness.

However, group fairness and individual fairness are not always consistent. In some cases, prioritizing group fairness may disproportionately burden certain individuals. To illustrate this, we consider the following example.

\begin{example}
    Consider a submission limit problem as defined in Definition~\ref{def:submit_limit_problem} with $x = 2$, $n = 5$ authors, and $m = 4$ papers. Let author $a_1$ submit four papers $p_1, p_2, p_3, p_4$, author $a_2$ submit two papers $p_1, p_2$, and authors $a_3, a_4, a_5$ be coauthors of papers $p_3, p_4$. 
\end{example}

In this scenario, an ideal desk-rejection is impossible because $a_1$ must have two papers rejected, but rejecting any papers would cause at least one of the authors in $a_2, \ldots, a_5$ to fall below the submission limit of $x=2$. Here, group fairness and individual fairness diverge: Algorithm~\ref{alg:fair_desk_reject_algo} minimizes group fairness by rejecting $p_1$ and $p_2$ (i.e., $S = \{p_3, p_4\}$), which unfairly excludes all of $a_2$'s submissions. This results in fairness metrics $\zeta_{\mathrm{group}}(S) = \frac{1}{4}(1/2 + 1 + 0 + 0) = 3/8$ and $\zeta_{\mathrm{ind}}(S) = \max\{1/2, 1, 0, 0\} = 1$. 

Conversely, the individual fairness minimization problem in Definition~\ref{def:ind_fair_min_matrix} rejects one paper from $a_1, a_2$ and another from $a_3, a_4$, leading to $\zeta_{\mathrm{group}}(S) = \frac{1}{4}(1/2 + 1/2 + 1/2 + 1/2) = 1/2$ and $\zeta_{\mathrm{ind}}(S) = \max\{1/2, 1/2, 1/2, 1/2\} = 1/2$.

This example highlights an unintended consequence of minimizing group fairness: it may unfairly penalize authors with fewer coauthors, as rejecting their papers incurs a smaller total cost. On the other hand, optimizing individual fairness inevitably spreads rejections across a broader set of authors, potentially leading to a higher overall fairness cost. Balancing individual and group fairness remains an open challenge, which we leave for future work.

\section{Summary of Conference Links} \label{app:sec:conference_links}

In the introduction, Table~\ref{tab:conference_submission_limit} only gives a brief summary of the conference year and its limitation of per-author submission. 
Thus, we provide a detailed list of conferences in each year in this section, and then summarize the submission limits in Table.~\ref{tab:conference_submission_limit_full}. 
\begin{itemize}
    \item CVPR 
    \begin{itemize}
        \item 2025, \url{https://cvpr.thecvf.com/Conferences/2025/CVPRChanges} 
        \item 2024, \url{https://cvpr.thecvf.com/Conferences/2024/AuthorGuidelines}
    \end{itemize}

    \item ICCV
    \begin{itemize}
        \item 2025, \url{https://iccv.thecvf.com/Conferences/2025/AuthorGuidelines}
        \item 2023, \url{https://iccv2023.thecvf.com/policies-361500-2-20-15.php}
    \end{itemize}
    
    \item AAAI
    \begin{itemize}
        \item 2025, \url{https://aaai.org/conference/aaai/aaai-25/submission-instructions/} 
        \item 2024, \url{https://aaai.org/aaai-24-conference/submission-instructions/} 
        \item 2023, \url{https://aaai-23.aaai.org/submission-guidelines/} 
        \item 2022, \url{https://aaai.org/conference/aaai/aaai-22/}
    \end{itemize}
    
    \item WSDM 
    \begin{itemize}
        \item 2025, \url{https://www.wsdm-conference.org/2025/call-for-papers/} 
        \item 2024, \url{https://www.wsdm-conference.org/2024/call-for-papers/} 
        \item 2023, \url{https://www.wsdm-conference.org/2023/calls/call-papers/} 
        \item 2022, \url{https://www.wsdm-conference.org/2022/calls/} 
        \item 2021, \url{https://www.wsdm-conference.org/2021/call-for-papers.php} 
        \item 2020, \url{https://www.wsdm-conference.org/2020/call-for-papers.php} 
    \end{itemize}
    
    \item IJCAI
    \begin{itemize}
        \item 2025, \url{https://2025.ijcai.org/call-for-papers-main-track/} 
        \item 2024, \url{https://ijcai24.org/call-for-papers/}
        \item 2023, \url{https://ijcai-23.org/call-for-papers/}
        \item 2022, \url{https://ijcai-22.org/calls-papers}
        \item 2021, \url{https://ijcai-21.org/cfp/index.html}
        \item 2020, \url{https://ijcai20.org/call-for-papers/index.html}
        \item 2019, \url{https://www.ijcai19.org/call-for-papers.html}
        \item 2018, \url{https://www.ijcai-18.org/cfp/index.html}
        \item 2017, \url{https://ijcai-17.org/MainTrackCFP.html}
    \end{itemize}
    
    \item KDD 
    \begin{itemize}
        \item 2025, \url{https://kdd2025.kdd.org/research-track-call-for-papers/} 
        \item 2024, \url{https://kdd2024.kdd.org/research-track-call-for-papers/}
        \item 2023, \url{https://kdd.org/kdd2023/call-for-research-track-papers/index.html}
    \end{itemize}
\end{itemize}

\newpage
\begin{table}[!ht]\caption{ 
In this table, we summarize the submission limits of top conferences in recent years. For details of each conference website, we refer the readers to Section~\ref{app:sec:conference_links} in Appendix. 
}  \label{tab:conference_submission_limit_full}
\begin{center}
\begin{tabular}{ |c|c|c|c|c| } 
 \hline
 {\bf Conference Name} & {\bf Year} & {\bf Upper Bound} \\ \hline
 CVPR & 2025 & 25 \\ \hline
 CVPR & 2024 & N/A \\ \hline
 ICCV & 2025 & 25 \\ \hline
 ICCV & 2023 & N/A \\ \hline
 AAAI & 2025 & 10 \\ \hline
 AAAI & 2024 & 10 \\ \hline
 AAAI & 2023 & 10 \\ \hline
 AAAI & 2022 & N/A \\ \hline
 WSDM & 2025 & 10 \\ \hline
 WSDM & 2024 & 10 \\ \hline
 WSDM & 2023 & 10 \\ \hline
 WSDM & 2022 & 10 \\ \hline
 WSDM & 2021 & 10 \\ \hline
 WSDM & 2020 & N/A \\ \hline
 IJCAI & 2025 & 8 \\ \hline
 IJCAI & 2024 & 8 \\ \hline
 IJCAI & 2023 & 8 \\ \hline
 IJCAI & 2022 & 8 \\ \hline
 IJCAI & 2021 & 8 \\ \hline
 IJCAI & 2020 & 6 \\ \hline
 IJCAI & 2019 & 10 \\ \hline
 IJCAI & 2018 & 10 \\ \hline
 IJCAI & 2017 & N/A \\ \hline
 KDD & 2025 & 7 \\ \hline
 KDD & 2024 & 7 \\ \hline
 KDD & 2023 & N/A \\ \hline
\end{tabular}
\end{center}
\end{table}



\end{document}